\documentclass{article}

\usepackage{microtype}
\usepackage{graphicx}
\usepackage{subcaption}
\usepackage{booktabs} 

\usepackage{hyperref}

\usepackage[preprint]{icml2026}

\usepackage{amsmath}
\usepackage{amssymb}
\usepackage{mathtools}
\usepackage{amsthm}

\usepackage[capitalize,noabbrev]{cleveref}

\theoremstyle{plain}
\newtheorem{theorem}{Theorem}[section]

\newtheorem{lemma}[theorem]{Lemma}

\theoremstyle{definition}

\newtheorem{assumption}[theorem]{Assumption}
\theoremstyle{remark}

\usepackage{algorithm}
\usepackage{algorithmic}
\usepackage{color}
\usepackage{amsfonts}
\newcommand{\bs}{\boldsymbol}
\newcommand{\mc}{\mathcal}
\renewcommand{\rm}{\mathrm}

\usepackage{comment}
\usepackage{bm}
\usepackage{nicefrac}

\usepackage[textsize=tiny]{todonotes}

\icmltitlerunning{Linear Convergence in Games with Delayed Feedback via Extra Prediction}

\begin{document}

\twocolumn[
  \icmltitle{Linear Convergence in Games with Delayed Feedback via Extra Prediction}



  \icmlsetsymbol{equal}{*}

  \begin{icmlauthorlist}
    \icmlauthor{Yuma Fujimoto}{ca,soken}
    \icmlauthor{Kenshi Abe}{ca}
    \icmlauthor{Kaito Ariu}{ca}
  \end{icmlauthorlist}

  \icmlaffiliation{ca}{CyberAgent, Tokyo, Japan}
  \icmlaffiliation{soken}{Soken University, Kanagawa, Japan}

  \icmlcorrespondingauthor{Yuma Fujimoto}{fujimoto.yuma1991@gmail.com}

  \icmlkeywords{Bilinear Games, Learning in Games, Last-Iterate Convergence, Optimistic}

  \vskip 0.3in
]



\printAffiliationsAndNotice{}  

\begin{abstract}
Feedback delays are inevitable in real-world multi-agent learning. They are known to severely degrade performance, and the convergence rate under delayed feedback is still unclear, even for bilinear games. This paper derives the rate of linear convergence of Weighted Optimistic Gradient Descent-Ascent (WOGDA), which predicts future rewards with extra optimism, in unconstrained bilinear games. To analyze the algorithm, we interpret it as an approximation of the Extra Proximal Point (EPP), which is updated based on farther future rewards than the classical Proximal Point (PP). Our theorems show that standard optimism (predicting the next-step reward) achieves linear convergence to the equilibrium at a rate $\exp(-\Theta(t/m^{5}))$ after $t$ iterations for delay $m$. Moreover, employing extra optimism (predicting farther future reward) tolerates a larger step size and significantly accelerates the rate to $\exp(-\Theta(t/(m^{2}\log m)))$. Our experiments also show accelerated convergence driven by the extra optimism and are qualitatively consistent with our theorems. In summary, this paper validates that extra optimism is a promising countermeasure against performance degradation caused by feedback delays.
\end{abstract}

\section{Introduction}
Online learning aims for efficient sequential decision-making. Typically, it assumes an ideal situation in which current strategies can be determined from all the past feedback. In real-world online learning scenarios, however, delays in feedback are generally inevitable. For instance, in online advertising, there is often a significant time lag between displaying an ad and observing a conversion~\cite{chapelle2014modeling, yoshikawa2018nonparametric, yasui2020feedback}. Similarly, in distributed learning, communication latency and asynchronous updates inherently introduce delays in gradient aggregation~\cite{agarwal2011distributed, mcmahan2014delay, zheng2017asynchronous}. Indeed, a considerable number of papers on online learning are motivated by such feedback delays and report that delays amplify regret for full feedback~\cite{weinberger2002delayed, zinkevich2009slow, quanrud2015online, joulani2016delay, shamir2017online} and bandit feedback~\cite{neu2010online, joulani2013online, desautels2014parallelizing, cesa2016delay, vernade2017stochastic, pike2018bandits, cesa2018nonstochastic, li2019bandit}.

Such feedback delays have also been of interest in multi-agent learning or learning in games~\cite{zhou2017countering, hsieh2022multi} and are known to severely degrade performance~\cite{fujimoto2025learning}. This is because good performance in multi-agent learning is based on each agent predicting their future reward, and feedback delays make this prediction more challenging. Indeed, Optimistic Follow the Regularized Leader (OFTRL), which is a predictive algorithm and enjoys $O(1)$-regret under instantaneous feedback, suffers from $\Omega(\sqrt{T})$-regret for the time horizon $T$. Even if we adopt a delay-correction mechanism called ``Weighted'' OFTRL (WOFTRL), the regret scales as $O(m^{2})$, growing too large with delay $m$.

Despite these prior studies, fundamental challenges remain in delayed feedback in multi-agent learning, especially about convergence analysis in bilinear games, defined as
\begin{align}
    \max_{\bs{x}\in\mc{X}}\min_{\bs{y}\in\mc{Y}}\bs{x}^{\rm T}\bs{B}\bs{y}. \tag{bilinear game} \label{bilinear}
\end{align}
The prior study~\cite{fujimoto2025learning} proved WOFTRL converges to the equilibrium, called last-iterate convergence (LIC), when $\mc{X}$ and $\mc{Y}$ are constrained to probability spaces. However, LIC in the unconstrained setting $\mc{X}=\mathbb{R}^{d_{\rm{X}}}$ and $\mc{Y}=\mathbb{R}^{d_{\rm{Y}}}$ under delayed feedback is not guaranteed. Furthermore, its convergence rate is still unestablished. Finding this rate is vital for understanding how quickly agents can stabilize their strategies in applications, and thus convergence rate is an attractive topic in the context of learning in games. Lastly, although the experiment in the prior research suggests that predicting the farther future than necessary to correcting the delays (called ``extra prediction'') results in faster convergence, the validity of this extra prediction has yet to be established.

In this paper, we address these open problems in unconstrained bilinear games. Our contributions are as follows.
\begin{itemize}
\item {\bf We establish the rate of linear convergence even with feedback delays.} We approximate our algorithm WOGDA by Extra Proximal Point (EPP), an extension of classical Proximal Point (PP) to predict future rewards. We prove that EPP linearly converges and that the difference between WOGDA and EPP is sufficiently small by setting the step size appropriately.
\item {\bf We demonstrate that extra prediction accelerates convergence.} We find that extra prediction permits larger step sizes, and the underlying EPP converges faster. Therefore, WOGDA with extra prediction achieves much faster convergence at the scale of delay $m$.
\item {\bf Our theoretical results are also reproduced in experiments.} Both the linear convergence and the acceleration by extra prediction are observed in experiments using both representative (Matching Pennies) and unintended ($5\times 5$ random matrix) games.
\end{itemize}

\paragraph{Unconstrained bilinear games:} This study targets the class of unconstrained bilinear games. This class is closely related to min-max optimization, and convergence is an important issue there. Also, unconstrained bilinear games are one of the minimum necessary configurations (zero-sum utility and Euclidean strategy space), including difficulties specific to multi-agent learning, and thus have the potential to develop into other various advanced configurations, such as convex-concave utility and constrained strategy space. Indeed, the celebrated study showing LIC in unconstrained bilinear games~\cite{daskalakis2018training} has been thereafter applied to the constrained setting~\cite{mertikopoulos2019optimistic}. The linear convergence was first demonstrated in unconstrained bilinear games~\cite{mokhtari2020unified} and later shown to hold in constrained saddle-point problems~\cite{wei2021linear}. Also, LIC in time-varying games was first proven in unconstrained bilinear games~\cite{feng2023last} and later discussed for the constrained setting~\cite{feng2024last, fujimoto2025synchronization}. To summarize, unconstrained bilinear games serve as a touchstone for analyzing novel phenomena in learning in games.

\paragraph{Analysis Based on PP method:} 
Many algorithms in learning in games are analyzed based on the PP method. Here, PP linearly converges to equilibrium and thus is a powerful algorithm, but its applicability is limited because it uses the information on the next-step reward (thus, implicit and coupled method). For example, in the unconstrained setting, PP is used to evaluate predictive variant of gradient descent-ascent (GDA), i.e., optimistic GDA (OGDA)~\cite{mokhtari2020unified}. In the constrained setting, optimistic algorithms are analyzed using the next-step reward~\cite{rakhlin2013optimization, syrgkanis2015fast}. A method that approximates this PP with arbitrary precision has been proposed~\cite{piliouras2022beyond, cevher2023min}. Overall, the advantage of predicting the next step has been widely discussed, but this advantage is still unexplained for predicting farther future than the next step.

\section{Unconstrained Bilinear Game}
This study addresses \ref{bilinear} with unconstrained setting $\mc{X}=\mathbb{R}^{d_{\rm X}}$ and $\mc{Y}=\mathbb{R}^{d_{\rm Y}}$. The solution of this bilinear game is given by $(\bs{x}_{*},\bs{y}_{*})$ such that
\begin{align}
    \bs{B}^{\rm T}\bs{x}_{*}=\bs{0},\quad \bs{B}\bs{y}_{*}=\bs{0}.
\end{align}

For each time $t\in\{1,\cdots, T\}$, each player determines their strategies as $\bs{x}_{t}\in \mathbb{R}^{d_{\rm X}}$ and $\bs{y}_{t}\in \mathbb{R}^{d_{\rm Y}}$. Each game play returns the full feedback, where each player observes the gradient of their objective function, i.e., $\bs{u}_{t}=\bs{B}\bs{y}_{t}$ for X and $\bs{v}_{t}=-\bs{B}^{\rm T}\bs{x}_{t}$ for Y.

\subsection{Delayed Feedback}
While standard online learning assumes instantaneous feedback where each player can use all their strategies and receive rewards up to time $t$ to determine their next strategy, this study addresses a realistic scenario involving a fixed time delay of $m$ steps in observing their rewards. Formally, although player X should determine their next strategy as a function $\bs{x}_{t+1}(\{\bs{x}_{s}\}_{0\le s\le t}, \{\bs{u}_{s}\}_{0\le s\le t})$ in the standard setting, suppose that under delayed feedback with $m\in\mathbb{N}$, they should instead determine a function
\begin{align}
    \bs{x}_{t+1}(\{\bs{x}_{s}\}_{0\le s\le t}, \{\bs{u}_{s}\}_{0\le s\le t-m}). \tag{Delayed Feedback}
\end{align}

\subsection{Notation and Assumption}
For convenience, we introduce the notations of
\begin{align}
    &\bs{z}_{t}=\begin{pmatrix}
        \bs{x}_{t} \\
        \bs{y}_{t} \\
    \end{pmatrix},\quad \bs{w}_{t}=\begin{pmatrix}
        \bs{u}_{t} \\
        \bs{v}_{t} \\
    \end{pmatrix}, \label{concate_vector}\\
    &\bs{A}=\begin{pmatrix}
        \bs{O} & \bs{B} \\
        -\bs{B}^{\rm T} & \bs{O} \\
    \end{pmatrix},\quad \bar{\bs{A}}=\begin{pmatrix}
        \bs{O} & \bs{B} \\
        \bs{B}^{\rm T} & \bs{O} \\
    \end{pmatrix}. \label{concate_matrix}
\end{align}
Here, $\bs{z}_{t}$ and $\bs{w}_{t}$ are the concatenation of the strategies and rewards, respectively. $\bs{A}$ is the matrix such that $\bs{w}_{t}=\bs{A}\bs{z}_{t}$ holds. We also defined $\bar{\bs{A}}$, which satisfies $\bar{\bs{A}}^{\rm T}=\bar{\bs{A}}$ and $\bs{A}^{\rm T}\bs{A}=\bar{\bs{A}}^{2}$.
\begin{align}
    \bs{A}^{\rm T}=-\bs{A},\quad \bs{A}^{\rm T}\bs{A}=-\bs{A}^{2}=\bar{\bs{A}}^{2}.
\end{align}

Following the previous studies on linear convergence to the Nash equilibrium~\cite{mokhtari2020unified}, we also assume that $\bs{B}$ is a regular matrix.

\begin{assumption}[Regular Matrix Game]
Suppose that $d_{\rm X}=d_{\rm Y}=:d$ and $\bs{B}$ is a regular matrix.
\end{assumption}

This assumption is convenient for discussing several convergence properties. First, because $\bs{B}$ has the inverse matrix, the Nash equilibrium is uniquely given by $\bs{x}_{*}=\bs{y}_{*}=\bs{0}$. Second, because $\bs{B}$ is a full-rank matrix, the skew-symmetric matrix $\bs{A}$ has only pure imaginary eigenvalues, denoted as $\pm i\lambda_{\min},\cdots,\pm i\lambda_{\max}$ for some $(0<)\lambda_{\min}\le\cdots\le\lambda_{\max}$. Thus, we can evaluate
\begin{align}
    0<\lambda_{\min}\|\bs{z}\|\le \|\bs{A}\bs{z}\|\le \lambda_{\max}\|\bs{z}\|,
\end{align}
for all $\bs{z}\in\mathbb{R}^{2d}\setminus\{\bs{0}\}$. We note that we use $\|\cdot\|$ indicates the Euclidean norm $\|\cdot\|_{2}$ throughout this paper. We also define the ratio of the maximum and minimum eigenvalues as
\begin{align}
    \kappa=\frac{\lambda_{\max}}{\lambda_{\min}}(>1).
\end{align}

\paragraph{Remark:} Bilinear games can be extended to the following alternative problem
\begin{align}
    \max_{\bs{x}\in\mathbb{R}^{d_{\rm X}}}\min_{\bs{y}\in\mathbb{R}^{d_{\rm Y}}}\bs{x}^{\rm T}\bs{B}\bs{y}+\bs{x}^{\rm T}\bs{c}'+\bs{c}^{\rm T}\bs{y},
\end{align}
where $\bs{c}, \bs{c}'\in\mathbb{R}^{d}$ are unobservable vectors. The equilibrium of this alternative problem is $\bs{x}_{*}=-(\bs{B}^{\rm T})^{-1}\bs{c}$ and $\bs{y}_{*}=-\bs{B}^{-1}\bs{c}'$. As similar to the study~\cite{daskalakis2018training}, our algorithm is applicable to this alternative problem. Even though one might consider that $\bs{x}_{*}=\bs{y}_{*}=\bs{0}$ is trivial equilibrium, this study, just for the simplicity of notation, chooses the special case of $\bs{c}=\bs{c}'=\bs{0}$ without loss of generality.

\section{Algorithm}
In this work, we focus on Weighted Optimistic Gradient Descent-Ascent (WOGDA), which is the application of WOFTRL and WOMD algorithms~\cite{fujimoto2025learning} to unconstrained bilinear games. (Technically, WOGDA is obtained by replacing the constrained strategy domain with a Euclidean space and employing a Euclidean regularizer in WOFTRL and WOMD.)

\subsection{WOGDA}
In the delayed feedback setting, the next strategy $\bs{z}_{t+1}$ should be determined by the delayed rewards, i.e., $\bs{w}_{1},\cdots,\bs{w}_{t-m}$. With extra prediction length $n\in\mathbb{N}$, the WOGDA gives $\bs{z}_{t+1}$ as follows.
\begin{align}
    \begin{cases}
    \hat{\bs{z}}_{t+1}=\hat{\bs{z}}_{t}+\eta\bs{w}_{t+1} \\
    \bs{z}_{t+1}=\hat{\bs{z}}_{t-m}+(n+m)\eta\bs{w}_{t-m}
    \end{cases}.\tag{WOGDA} \label{WOGDA}
\end{align}
Because the next strategy $\bs{z}_{t+1}$ depends on $\hat{\bs{z}}_{t-m}$ (the first term) determined only by $\bs{w}_{1},\cdots,\bs{w}_{t-m}$ and $\bs{w}_{t-m}$ (second), WOGDA is executable even with feedback delays $m$. By using the following notation
\begin{align}
    \Delta\bs{a}_{t}=\bs{a}_{t+1}-\bs{a}_{t}, \tag{$\Delta$-notation} \label{delta_notation}
\end{align}
\ref{WOGDA} is also written as
\begin{align}
    \Delta\bs{z}_{t}&=\eta\bs{A}\bs{z}_{t-m}+(n+m)\eta\bs{A}\Delta\bs{z}_{t-m-1}. \label{WOGDA_delta}
\end{align}

\paragraph{Connection to GDA and OGDA:} If there is no delay ($m=0$), the WOGDA immediately reproduces the GDA ($n=0$) and OGDA ($n=1$) as
\begin{align}
    \bs{z}_{t+1}=&\ \bs{z}_{t}+\eta\bs{A}\bs{z}_{t}, \tag{GDA} \\
    \bs{z}_{t+1}=&\ \bs{z}_{t}+\eta\bs{A}\bs{z}_{t}+\eta\bs{A}(\bs{z}_{t}-\bs{z}_{t-1}). \tag{OGDA} \label{OGDA}
\end{align}

\paragraph{Interpretation:} 
Here, $\hat{\bs{z}}_{t}$ is a dummy variable and represents the cumulative rewards until the current time $t$. Thus, the strategy $\bs{z}_{t+1}$ is given by the observable cumulative rewards $\hat{\bs{z}}_{t-m}$ and weighting the latest rewards $(n+m)$ times. Here, the weight $m$ is interpreted as canceling out the delay, because depending on the delay length $m$, the number of observable rewards is reduced by $m$ in $\hat{\bs{z}}_{t-m}$. On the other hand, the weight $n$ is interpreted as predicting the future reward, because the cumulative of future rewards $\bs{w}_{t+1},\cdots,\bs{w}_{t+n}$ is extrapolated by the latest reward $\bs{w}_{t-m}$. Therefore, when $n=1$, WOGDA predicts the next-step cumulative reward, called ``next-step prediction''. When $n>1$, it predicts further future cumulative rewards, called ``extra prediction''.

\section{Linear Convergence of WOGDA}
This section presents the main theorems, showing the rate of linear convergence of WOGDA. We consider two cases: $n=1$ (next-step prediction) and $n=m/2+1$ (extra prediction). Interestingly, depending on the difference of these $n$, the possible step size and convergence rate change on the scale of $m$. We skip the proofs of these main theorems until the later section, and see Appendix for the omitted full proofs.

\subsection{Convergence by Next-Step Prediction}
Following the conventional methodology, suppose $n=1$, where WOGDA predicts the reward at the next step, then we can provide the rate of linear convergence as follows.

\begin{theorem}[Linear Convergence by Next-Step Prediction] \label{thm_convergence_pp}
Suppose $n=1$ and $\eta=1/(56(m+1)^{2}\kappa^{2}\lambda_{\max})$, then it holds
\begin{align}
    \|\bs{z}_{t}\|&\le \exp\left(-\frac{t}{c\kappa^{6}(m+1)^{5}}\right)\|\tilde{\bs{z}}_{0}\|, \label{rate_pp}
\end{align}
with some positive constant $c(>0)$. Here, $\|\tilde{\bs{z}}_{0}\|$ is defined as
\begin{align}
    \|\tilde{\bs{z}}_{0}\|&=\max_{0\le s\le 4(m+1)}\|\bs{z}_{s}\|.
\end{align}
\end{theorem}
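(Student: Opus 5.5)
The plan follows the strategy announced in the introduction: compare WOGDA with $n=1$ against an idealized implicit scheme, the Extra Proximal Point (EPP), and control the discrepancy. With $n=1$ the weight $(m+1)$ in \eqref{WOGDA_delta} extrapolates the cumulative reward $m+1$ steps ahead. The natural reference update is therefore
\begin{align}
    \bs{z}_{t+1}=\bs{z}_{t-m}+(m+1)\eta\bs{A}\bs{z}_{t+1},
\end{align}
a proximal point step of effective size $(m+1)\eta$ taken over a window of $m+1$ steps.

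\textbf{Step 1: contraction of the reference scheme.} Since $\bs{A}$ is skew-symmetric, the operator $(\bs{I}-(m+1)\eta\bs{A})^{-1}$ is normal with singular values $(1+(m+1)^2\eta^2\lambda^2)^{-1/2}$. Each block of $m+1$ steps of the reference scheme therefore contracts by at least
\begin{align}
    (1+(m+1)^2\eta^2\lambda_{\min}^2)^{-1/2}\le 1-\Theta\bigl((m+1)^2\eta^2\lambda_{\min}^2\bigr).
\end{align}

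\textbf{Step 2: write WOGDA as a perturbed reference step.} Telescoping \eqref{WOGDA_delta} gives
\begin{align}
    \bs{z}_{t+1}=\bs{z}_{t-m}+\eta\textstyle\sum_{s=t-2m}^{t-m}\bs{A}\bs{z}_{s}+(m+1)\eta\bs{A}\Delta\bs{z}_{t-m-1}+\cdots.
\end{align}
I then rewrite this as
\begin{align}
    \bs{z}_{t+1}=\bs{z}_{t-m}+(m+1)\eta\bs{A}\bs{z}_{t+1}+\bs{e}_t,
\end{align}
where $\bs{e}_t$ collects the differences $\eta\bs{A}(\bs{z}_{s}-\bs{z}_{t+1})$ and the momentum-like terms.

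Each such difference is a sum of at most $O(m)$ increments $\Delta\bs{z}_{r}$. From \eqref{WOGDA_delta} and the operator-norm bound $\|\bs{A}\|\le\lambda_{\max}$, each increment satisfies
\begin{align}
    \|\Delta\bs{z}_r\|\le \eta\lambda_{\max}\|\bs{z}_{r-m}\|+(m+1)\eta\lambda_{\max}\|\Delta\bs{z}_{r-m-1}\|.
\end{align}
Under $(m+1)\eta\lambda_{\max}\le 1/56$, an induction shows $\|\Delta\bs{z}_r\|=O(\eta\lambda_{\max})\max_{\text{window}}\|\bs{z}\|$. Substituting into $\bs{e}_t$ yields
\begin{align}
    \|\bs{e}_t\|\le C(m+1)^3\eta^2\lambda_{\max}^2\max_{s\in[t-4(m+1),t]}\|\bs{z}_s\|.
\end{align}
Here $\eta$ contributes one factor, the window length of $m$ terms contributes $m$, and each difference contributes $m\eta\lambda_{\max}$. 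The window of length $4(m+1)$ is the origin of $\tilde{\bs{z}}_0$ in the statement.

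\textbf{Step 3: balance contraction against error.} Using the contraction from Step 1 together with $\|\bs{e}_t\|$ from Step 2, I define a potential $\Phi_k$ as the maximum of $\|\bs{z}_s\|$ over the $k$-th window of length $4(m+1)$. The goal is a recursion of the form
\begin{align}
    \Phi_{k+1}\le\bigl(1-c_1(m+1)^2\eta^2\lambda_{\min}^2+C(m+1)^3\eta^2\lambda_{\max}^2\bigr)\Phi_k.
\end{align}

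\textbf{Main obstacle.} As written, the error term dominates the contraction: it carries $\kappa^2$ and an extra factor of $m$. A crude triangle inequality is therefore insufficient. I would instead expand the error to first order and exploit skew-symmetry, so that the leading term $\eta\bs{A}(\cdot)$ is orthogonal to $\bs{z}$ and only second-order terms survive, i.e.\ terms of order $m^4\eta^3\lambda_{\max}^3$ relative to the contraction. The inner product $\langle\bs{z}_{t+1},\bs{e}_t\rangle$ can then be bounded by $O((m+1)^4\eta^3\lambda_{\max}^3)\Phi^2$.

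With the step size $\eta=1/(56(m+1)^2\kappa^2\lambda_{\max})$, this error is at most half the contraction $(m+1)^2\eta^2\lambda_{\min}^2$. The net contraction per window is then $\Theta((m+1)^2\eta^2\lambda_{\min}^2)=\Theta(1/(\kappa^6(m+1)^2))$. Since one window spans $m+1$ steps, iterating over $t/(4(m+1))$ windows gives a per-step rate of order $1/(\kappa^6(m+1)^3)$. The statement only claims the weaker rate $1/(c\kappa^6(m+1)^5)$, so this bookkeeping leaves slack of $(m+1)^2$ to absorb looser error estimates. Using $1-x\le e^{-x}$ then gives \eqref{rate_pp}.
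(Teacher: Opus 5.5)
There is a genuine gap, and it is exactly at the step you call the main obstacle. Your proposed fix cannot work, because the comparator itself is wrong. A proximal step of size $(m+1)\eta$ from $\bs{z}_{t-m}$ to $\bs{z}_{t+1}$ dissipates about $m+1$ times faster than WOGDA with $n=1$ actually does. So the residual $\bs{e}_t$ must contain a second-order piece that undoes most of that dissipation, and skew-symmetry does not remove it.

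You can check this on one mode. Put $\bs{z}_s=\rho^s\bs{v}$ with $\bs{A}\bs{v}=i\lambda\bs{v}$ into \eqref{WOGDA_delta}. The root near $1$ is $\rho=\exp(i\eta\lambda-\tfrac12\eta^2\lambda^2+\cdots)$; the other $m+1$ roots have modulus about $((m+1)\eta\lambda)^{1/(m+1)}$ and decay much faster. Then $\bs{e}_t=c\,\bs{z}_{t-m}$ with
\[
c=\bigl(1-i(m+1)\eta\lambda\bigr)\rho^{m+1}-1=\tfrac{m(m+1)}{2}\eta^2\lambda^2+O\bigl((m+1)^3\eta^3\lambda^3\bigr).
\]
In real form, $\bs{e}_t\approx\tfrac{m(m+1)}{2}\eta^2\bar{\bs{A}}^2\bs{z}_{t-m}$. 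This is a positive-definite multiple of $\bs{z}$, not something orthogonal to it.

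Its contribution to $\langle\bs{z},\bs{e}_t\rangle$ is therefore of order $m^2\eta^2\lambda^2\|\bs{z}\|^2$, not $(m+1)^4\eta^3\lambda^3$. That is the same order in $\eta$ as your contraction $\tfrac12(m+1)^2\eta^2\lambda^2$, so no step size makes it subordinate. The net contraction per block is only $\tfrac12(m+1)\eta^2\lambda^2$.

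Hence your claimed per-window contraction $\Theta((m+1)^2\eta^2\lambda_{\min}^2)$ and per-step rate $\kappa^{-6}(m+1)^{-3}$ are false. At the prescribed $\eta$, the $\lambda_{\min}$-mode decays only like $1-\tfrac12\eta^2\lambda_{\min}^2=1-\Theta(\kappa^{-6}(m+1)^{-4})$ per step. The $(m+1)^2$ of slack does not rescue this, because the defect is the order of the residual, not the exponent bookkeeping. Recovering the true margin from your decomposition would mean matching two $\Theta(m^2\eta^2\lambda^2)$ quantities to relative accuracy $1/m$, mode by mode. A max-over-window norm argument cannot do that.

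What is missing is a comparator whose second-order dissipation coincides with WOGDA's, so that the residual is genuinely third order. The paper compares each single step with one PP step of size $\eta$ taken from $\bs{z}_t$, namely $\bs{z}^{\dagger}_{t+1}=(\bs{I}-\eta\bs{A})^{-1}\bs{z}_t$. Its per-step contraction $1-\tfrac12\eta^2\lambda^2+O(\eta^4)$ is exactly the true rate computed above.

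Subtracting \eqref{WOGDA_delta} gives $(\bs{I}-\eta\bs{A})(\bs{z}^{\dagger}_{t+1}-\bs{z}_{t+1})=\eta\bs{A}\{\Delta\bs{z}_t+\sum_{k=1}^{m}\Delta\bs{z}_{t-k}-(m+1)\Delta\bs{z}_{t-m-1}\}$. These coefficients sum to zero, which is what the weight $n+m$ is calibrated for. So the bracket is the combination $\Delta^2\bs{\mc{Z}}_t$ of $O(m^2)$ second differences.

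Unrolling \eqref{WOGDA_delta} shows each $\Delta^2\bs{z}_s$ is $O(\eta^2\lambda_{\max}^2)$ times a windowed maximum of $\|\bs{z}\|$. The error is therefore $O((m+1)^2\eta^3\lambda_{\max}^3)$ times that maximum. At the prescribed step size, $(m+1)^2\eta^3\lambda_{\max}^3=\tfrac{1}{56}\eta^2\lambda_{\min}^2$ exactly.

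A plain triangle inequality then suffices, with no inner-product refinement. It yields the factor $1-\tfrac16\eta^2\lambda_{\min}^2=1-\Theta(\kappa^{-6}(m+1)^{-4})$ per window of $4(m+1)$ steps, and hence the $(m+1)^5$ in the statement.
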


\paragraph{Connection to Prior Research:} Theorem \ref{thm_convergence_pp} establishes a linear convergence rate, guaranteeing exponential convergence to equilibrium, whereas this remained an open question in previous work~\cite{fujimoto2025learning}. Although our setting (unconstrained bilinear) differs to some extent from theirs (poly-matrix zero-sum), both approaches share the underlying mechanism (approximating the PP method) and the step size scaling ($\eta=O(1/m^{2})$).

\subsection{Convergence by Extra Prediction}
Beyond the conventional methodology, suppose $n=m/2+1$, where WOGDA extra-predicts the reward beyond the next step, then we can prove accelerated linear convergence as follows.

\begin{theorem}[Linear Convergence by Extra Prediction] \label{thm_convergence_epp}
Suppose $n=m/2+1$ and $\eta=1/(93(m+1)^{\frac{2j}{2j-1}}\kappa^{2}\lambda_{\max})$ with $j=\lfloor\log(m+1)\rfloor+2$, then it holds
\begin{align}
    \|\bs{z}_{t}\|&\le \exp\left(-\frac{t}{c\kappa^{6}(m+1)^2(\log(m+1)+2)}\right)\|\tilde{\bs{z}}_{0}\|, \label{rate_epp}
\end{align}
with some positive constant $c(>0)$. Here, $\|\tilde{\bs{z}}_{0}\|$ is defined as
\begin{align}
    \|\tilde{\bs{z}}_{0}\|&=\max_{0\le s\le 2(m+1)(\log(m+1)+2)}\|\bs{z}_{s}\|.
\end{align}
\end{theorem}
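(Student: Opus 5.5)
The plan is to read \ref{WOGDA} as a perturbation of an \emph{Extra Proximal Point} (EPP) iteration and then control the perturbation through the stated step size. Set $n=m/2+1$ in \eqref{WOGDA_delta}. The latest observed reward $\bs{w}_{t-m}$ is given weight $n+m=3m/2+1$. If the trajectory were exactly linear over the window, this extrapolation would be equivalent to evaluating the reward at roughly $m/2$ steps into the future. The comparison iteration I would use is therefore
\begin{align}
\bs{z}_{t+1}=\bs{z}_{t}+\eta\bs{A}\bs{z}_{t+k},
\end{align}
with $k\approx m/2+1$. This EPP map is a linear recursion in $\bs{z}$. Diagonalizing $\bs{A}$ reduces it, on each eigenvalue $\pm i\lambda$, to a scalar recursion. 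Because the future argument is implicit, the natural object is the characteristic root $r$ solving $r-1=i\eta\lambda r^{k}$. I would expand $r=1+i\eta\lambda+\dots$ to $O((k\eta\lambda)^{2})$ and show that $|r|^{2}\le 1-c'\,k\,\eta^{2}\lambda^{2}$ as long as $k\eta\lambda_{\max}\le$ a small constant. The factor $k$ in the contraction is the gain from extra prediction: for PP ($k=1$) the contraction is only $\eta^{2}\lambda^{2}$.

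The second step is the error analysis. I would rewrite WOGDA as the EPP step plus a remainder $\bs{e}_{t}$. The remainder is $\eta\bs{A}$ applied to the difference between the linear extrapolation $\bs{z}_{t-m}+(n+m)\Delta\bs{z}_{t-m-1}$ and the true future value $\bs{z}_{t+k}$. By Taylor-expanding the increments $\Delta\bs{z}_{s}$ along the recursion, this remainder is bounded by higher-order differences, of size $O((m\eta\lambda_{\max})^{2}\max_{s}\|\bs{z}_{s}\|)$ over a window of length $O(m)$.

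A single correction is not enough to get the claimed scaling. I would iterate the substitution $j$ times, each time replacing future values using the recursion itself, so that the residual becomes $O((m\eta\lambda)^{j})$ relative to the signal. The EPP contraction per step is $\Theta(m\eta^{2}\lambda^{2}/\kappa^{2})$. Requiring the residual to be dominated by the contraction, $(m\eta\lambda)^{j}\lesssim m\eta^{2}\lambda_{\min}^{2}$, is what fixes $\eta\sim m^{-2j/(2j-1)}$ up to $\kappa$ factors. Choosing $j=\lfloor\log(m+1)\rfloor+2$ makes $m^{1/(2j-1)}=O(1)$, so $\eta=\Theta(1/(m^{2}\kappa^{2}\lambda_{\max}))$ up to constants. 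This is also why the initial window $\tilde{\bs{z}}_{0}$ has length $2(m+1)(\log(m+1)+2)$: the $j$-fold expansion reaches back $O(jm)$ steps.

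The third step converts the comparison into a rate. I would define the potential $\Phi_{t}=\max$ of $\|\bs{z}_{s}\|$ over a window of length $L=2(m+1)j$. Combining the EPP contraction with the residual bound should give $\Phi_{t+L}\le(1-\Omega(L\,m\,\eta^{2}\lambda_{\min}^{2}))\Phi_{t}$. Summing these contractions over $t/L$ blocks and substituting $\eta$ yields an exponent of order $t\,m\,\eta^{2}\lambda_{\min}^{2}/(\text{block losses})$. The bookkeeping of the block losses is where the factor $\kappa^{6}(m+1)^{2}(\log(m+1)+2)$ should come from, with the $\log$ entering through $L\propto j$.

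I expect the main obstacle to be the second step. The difficulty is obtaining a $j$-th order residual bound that is uniform in $m$, with only geometric growth in constants like $93$, while simultaneously keeping $\bs{z}$ bounded over the delayed window. I would first try to express everything in the eigenbasis of $\bs{A}$, where each mode becomes a scalar linear recurrence of order $m+2$. The goal there is to show directly that the dominant root of the WOGDA characteristic polynomial lies within $O((m\eta\lambda)^{j})$ of the EPP root $r$, via Rouché's theorem on a small disc, and that all other roots have modulus at most $1/2$. The window maximum then handles the transient.
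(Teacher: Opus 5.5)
The ingredients you list ($j$-fold substitution of the recursion, a window maximum, $j\sim\log m$) are the paper's, but the comparison step does not work as you set it up. The relation $\bs{z}_{t+1}=\bs{z}_{t}+\eta\bs{A}\bs{z}_{t+k}$ is not a map on the current state. Per mode it is a $k$-step recurrence with characteristic polynomial $\mu r^{k}-r+1$, $\mu=i\eta\lambda$. Its roots have moduli multiplying to $1/(\eta\lambda)$, so for integer $k\ge 2$ it has roots outside the unit disc besides the one near $1$. Your estimate $|r|^{2}\le 1-c'k\eta^{2}\lambda^{2}$ is therefore a property of a single root, not an inequality $\|\bs{z}_{t}+\eta\bs{A}\bs{z}_{t+k}\|\le\rho\|\bs{z}_{t}\|$. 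Moreover, both this ``EPP step'' and your remainder $\bs{e}_{t}$ involve the future iterate $\bs{z}_{t+k}$, which a maximum over past windows cannot control, so your third step has nothing to combine.

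The missing idea is the paper's one-step EPP, which replaces the future point by a linear extrapolation through the implicit step itself: $\bs{z}^{\dagger}_{t+1}=\bs{z}_{t}+\eta\bs{A}\bs{z}_{t}+n\eta\bs{A}(\bs{z}^{\dagger}_{t+1}-\bs{z}_{t})$, i.e.\ $\bs{z}^{\dagger}_{t+1}=(\bs{I}-n\eta\bs{A})^{-1}(\bs{I}-(n-1)\eta\bs{A})\bs{z}_{t}$. Skew-symmetry of $\bs{A}$ makes $\|\bs{z}^{\dagger}_{t+1}\|^{2}$ the quadratic form of $(\bs{I}+(n-1)^{2}\eta^{2}\bar{\bs{A}}^{2})(\bs{I}+n^{2}\eta^{2}\bar{\bs{A}}^{2})^{-1}$. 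Hence $\|\bs{z}^{\dagger}_{t+1}\|\le\mathrm{LCR}_{\mathrm{EPP}}(n)\|\bs{z}_{t}\|$ for every $\bs{z}_{t}$. The gap to WOGDA is exactly $(\bs{I}-n\eta\bs{A})^{-1}\eta\bs{A}\,\Delta^{2}\bs{\mc{Z}}_{t}$ (the $\pm\Delta\bs{z}$ terms balance), which the $j$-fold recursion rewrites in terms of past iterates only. The triangle inequality then gives $\|\bs{z}_{t+1}\|\le(\mathrm{LCR}_{\mathrm{EPP}}(n)+\mathrm{ER}(j,n))\max_{0\le s\le 2j(m+1)}\|\bs{z}_{t-s}\|$.

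Your quantitative bookkeeping also does not produce the stated exponents.
\begin{itemize}
\item Iterating does not shrink the leading error $2(2n+m)(m+1)\eta^{3}\lambda_{\max}^{3}$ at all; that term is absorbed by the constant $93$ and the $\kappa^{2}$ in $\eta$. Iterating only pushes down the crude bound on the unexpanded tail. Since $\Delta^{2}\bs{\mc{Z}}_{t}$ is expanded through both differences, that tail is $2(2n+m+1)(n+m)^{2j}\eta^{2j+1}\lambda_{\max}^{2j+1}$, not $O((m\eta\lambda)^{j})$.
\item Balancing this tail against $\frac{m+1}{2}\eta^{2}\lambda_{\min}^{2}$ is what gives $\eta\propto m^{-2j/(2j-1)}$. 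Your inequality $(m\eta)^{j}\lesssim m\eta^{2}$ gives a different exponent.
\item $m^{2j/(2j-1)}=m\cdot m^{1/(2j-1)}=\Theta(m)$ for $j=\lfloor\log(m+1)\rfloor+2$, so $\eta=\Theta(1/(m\kappa^{2}\lambda_{\max}))$, not $\Theta(1/m^{2})$. With $\eta\sim m^{-2}$ the per-step gain would be only $\Theta(1/(\kappa^{6}m^{3}))$, and the target rate would be out of reach.
\item The correct count is a per-step factor $1-\frac{1}{6}(m+1)\eta^{2}\lambda_{\min}^{2}=1-\Theta(1/(\kappa^{6}m))$ relative to a window of length $2j(m+1)$. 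A window maximum drops by only one such factor per block, not the $L$-fold gain you posit for $\Phi_{t+L}$. That is where $(m+1)^{2}(\log(m+1)+2)$ comes from.
\end{itemize}

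Finally, your Rouch\'e fallback rests on a false claim. Per mode, the WOGDA characteristic polynomial $r^{m+1}(r-1)-\mu[(n+m+1)r-(n+m)]$ has constant term $\mu(n+m)$. So its $m+2$ root moduli multiply to $\eta\lambda(n+m)>1/(93e\kappa^{3})$ at the prescribed step size, uniformly in $m$. With the dominant root inside the unit disc, the remaining $m+1$ roots cannot all satisfy $|r|\le 1/2$ once $2^{-(m+1)}<1/(93e\kappa^{3})$. Their geometric-mean modulus is at least $(93e\kappa^{3})^{-1/(m+1)}\to 1$. Even correct root locations would leave the transient of this non-normal recursion uncontrolled, and with it the unit prefactor in front of $\|\tilde{\bs{z}}_{0}\|$.
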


\paragraph{Accelerated convergence by extra prediction:} Comparing Theorem \ref{thm_convergence_pp} with Theorem \ref{thm_convergence_epp}, we see that the convergence rate is quite accelerated from $\exp(-\Theta(t/m^{5}))$ in next-step prediction to $\exp(-\Theta(t/(m^{2}\log m)))$ in extra prediction. We also see that the step size increases from $\eta=O(1/m^{2})$ in next-step prediction to $\eta=O(1/(m\log m))$ in extra prediction. Since a larger step size is known to destabilize convergence in general, extra prediction enhances the stability of WOGDA. In conclusion, extra prediction is a strong countermeasure against feedback delays in both accelerated and stabilized convergence, as later supported by rigorous proofs and experiments.

\section{Proof of Theorems \ref{thm_convergence_pp} and \ref{thm_convergence_epp}}
To prove Theorems \ref{thm_convergence_pp} and \ref{thm_convergence_epp}, we analyze the dynamics of \ref{WOGDA} for all prediction lengths $n\in\mathbb{N}$ and step sizes $0\le \eta\le O(1/(n+m))$. The proof proceeds as follows. In \S\ref{ss_epp}, we prove that EPP, an implicit method approximating WOGDA, linearly converges. In \S\ref{ss_error}, we evaluate the approximation error between WOGDA and EPP. In \S\ref{ss_combination}, we combine the convergence term by EPP and the error term.

\subsection{Analysis of EPP} \label{ss_epp}
We introduce EPP, an extension of the classical PP method. For any $\bs{z}_{t}$, its EPP $\bs{z}^{\dagger}_{t+1}$ is determined as
\begin{align}
    \bs{z}^{\dagger}_{t+1}=&\ \bs{z}_{t}+\eta\bs{A}\bs{z}_{t}+n\eta\bs{A}(\bs{z}^{\dagger}_{t+1}-\bs{z}_{t}). \tag{EPP} \label{EPP}
\end{align}
Here, $n\in\mathbb{R}$ means the prediction length, corresponding to that in \ref{WOGDA}.

\paragraph{Connection to GDA and PP:} This EPP linearly connects well-known algorithms, GDA ($n=0$) and PP ($n=1$) as
\begin{align}
    \bs{z}^{\dagger}_{t+1}=&\ \bs{z}_{t}+\eta\bs{A}\bs{z}_{t}, \tag{GDA} \\
    \bs{z}^{\dagger}_{t+1}=&\ \bs{z}_{t}+\eta\bs{A}\bs{z}^{\dagger}_{t+1}. \tag{PP} \label{PP}
\end{align}
When $n=0$, EPP corresponds to GDA and updates its strategy based on the previous reward $\bs{A}\bs{z}_{t}$. When $n=1$, EPP corresponds to PP and is based on the next-step reward $\bs{A}\bs{z}^{\dagger}_{t+1}$. When $n>1$, EPP is based on the $n$-step future reward, meaning extra prediction.

The dynamics of \ref{EPP} satisfy the following lemma.

\begin{lemma}[Dynamics of EPP] \label{lem_dynamics_epp}
For all $n\in\mathbb{R}$, $0\le\eta$, and $\bs{z}_{t}\in\mathbb{R}^{2d}$, it holds
\begin{align}
    \|\bs{z}^{\dagger}_{t+1}\|^{2}=\bs{z}_{t}^{\rm T}\frac{\bs{I}+(n-1)^{2}\eta^{2}\bar{\bs{A}}^{2}}{\bs{I}+n^{2}\eta^{2}\bar{\bs{A}}^{2}}\bs{z}_{t}. \label{EPP_equality}
\end{align}
\end{lemma}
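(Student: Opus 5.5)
Solving EPP explicitly, $(\bs{I}-n\eta\bs{A})\bs{z}^{\dagger}_{t+1}=(\bs{I}+(1-n)\eta\bs{A})\bs{z}_{t}$. Since $\bs{A}$ is skew-symmetric, its eigenvalues are purely imaginary. Hence $\bs{I}-n\eta\bs{A}$ is invertible for every real $n$ and $\eta\ge 0$, because its eigenvalues $1\mp in\eta\lambda$ never vanish. This gives
\begin{align}
\bs{z}^{\dagger}_{t+1}=(\bs{I}-n\eta\bs{A})^{-1}(\bs{I}-(n-1)\eta\bs{A})\bs{z}_{t}.
\end{align}
The squared norm is then $\bs{z}_{t}^{\rm T}\bs{M}^{\rm T}\bs{M}\bs{z}_{t}$ with $\bs{M}=(\bs{I}-n\eta\bs{A})^{-1}(\bs{I}-(n-1)\eta\bs{A})$. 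The plan is to show that $\bs{M}^{\rm T}\bs{M}$ equals the stated rational function of $\bar{\bs{A}}^{2}$.

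The first key fact is that all the matrices involved are polynomials, or inverses of polynomials, in $\bs{A}$, so they commute with one another. Using $\bs{A}^{\rm T}=-\bs{A}$, we get $\bs{M}^{\rm T}=(\bs{I}+(n-1)\eta\bs{A})(\bs{I}+n\eta\bs{A})^{-1}$. Because everything commutes, we may write
\begin{align}
\bs{M}^{\rm T}\bs{M}=(\bs{I}+(n-1)\eta\bs{A})(\bs{I}-(n-1)\eta\bs{A})\,\bigl[(\bs{I}+n\eta\bs{A})(\bs{I}-n\eta\bs{A})\bigr]^{-1}.
\end{align}
Each product is a difference of squares. Thus $(\bs{I}+c\bs{A})(\bs{I}-c\bs{A})=\bs{I}-c^{2}\bs{A}^{2}=\bs{I}+c^{2}\bar{\bs{A}}^{2}$, where the last step uses the identity $-\bs{A}^{2}=\bar{\bs{A}}^{2}$ stated earlier.

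Substituting $c=(n-1)\eta$ and $c=n\eta$ gives
\begin{align}
\bs{M}^{\rm T}\bs{M}=(\bs{I}+(n-1)^{2}\eta^{2}\bar{\bs{A}}^{2})(\bs{I}+n^{2}\eta^{2}\bar{\bs{A}}^{2})^{-1}.
\end{align}
The two factors commute, so the fraction notation in \eqref{EPP_equality} is unambiguous. The matrix $\bs{I}+n^{2}\eta^{2}\bar{\bs{A}}^{2}$ is positive definite, since $\bar{\bs{A}}$ is symmetric, which gives invertibility directly.

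There is no real obstacle here. The only point needing care is justifying the commutation and invertibility steps, including the inverse-transpose identity $((\bs{I}-n\eta\bs{A})^{-1})^{\rm T}=(\bs{I}+n\eta\bs{A})^{-1}$. The assumption that $\bs{B}$ is regular is not even needed for this lemma.
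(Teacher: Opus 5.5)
Your proposal is correct and follows essentially the same route as the paper's proof. Both rewrite EPP as $(\bs{I}-n\eta\bs{A})\bs{z}^{\dagger}_{t+1}=(\bs{I}-(n-1)\eta\bs{A})\bs{z}_{t}$, invert using skew-symmetry, transpose with $\bs{A}^{\rm T}=-\bs{A}$, combine the commuting factors into differences of squares, and substitute $\bs{A}^{2}=-\bar{\bs{A}}^{2}$. Your eigenvalue justification of invertibility and your remark that regularity of $\bs{B}$ is not needed add a little rigor that the paper only asserts.
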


\paragraph{Fraction of matrices:} Here, note that we abused the fraction notation of matrices for convenience. This notation has no mathematical contradiction because the inverse matrix of the denominator ($\bs{I}+n^{2}\eta^{2}\bar{\bs{A}}^{2}$) exists, and it is also commutative with the numerator ($\bs{I}+(n-1)^{2}\eta^{2}\bar{\bs{A}}^{2}$). We will henceforth use this notation throughout this paper.

{\it Proof Sketch.}
First, \ref{EPP} can be rewritten as
\begin{align}
    (\bs{I}-n\eta\bs{A})\bs{z}^{\dagger}_{t+1}=(\bs{I}-(n-1)\eta\bs{A})\bs{z}_{t}.
\end{align}
Since $\bs{A}$ is skew-symmetric, the inverse matrix of $I-n\eta\bs{A}$ exists and is also commutative with $\bs{I}-(n-1)\eta\bs{A}$. Therefore, we obtain
\begin{align}
    \bs{z}^{\dagger}_{t+1}=\frac{\bs{I}-(n-1)\eta\bs{A}}{I-n\eta\bs{A}}\bs{z}_{t}.
\end{align}
which derives Eq.~\eqref{EPP_equality}. Here, note that by utilizing the properties of Eqs.~\eqref{concate_matrix}, the cross terms between $\bs{x}_{t}$ and $\bs{y}_{t}$ cancels out and only the squared terms (corresponding to $\bar{\bs{A}}^{2}$) remain.
\qed

Since $\bar{\bs{A}}$ has only positive eigenvalues, Lemma \ref{lem_dynamics_epp} immediately implies linear convergence as follows.

\begin{lemma}[Analysis of EPP] \label{lem_analysis_epp}
For all $n\in\mathbb{R}$, $0\le\eta\le 1/(n\lambda_{\min})$, and $\bs{z}_{t}\in\mathbb{R}^{2d}$, it holds
\begin{align}
    \|\bs{z}^{\dagger}_{t+1}\|&\le\rm{LCR}_{\rm{EPP}}(n)\|\bs{z}_{t}\|, \\
    \rm{LCR}_{\rm{EPP}}(n)&=1-\frac{2n-1}{2}\eta^{2}\lambda_{\min}^{2}+\frac{n^{2}(2n-1)}{2}\eta^{4}\lambda_{\min}^{4}. \tag{Linear Convergence Rate of EPP}
\end{align}
\end{lemma}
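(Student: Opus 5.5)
The plan is to start from the exact identity of Lemma~\ref{lem_dynamics_epp} and reduce everything to a scalar inequality on the spectrum of $\bar{\bs{A}}^{2}$. I would first rewrite the matrix fraction as
\begin{align}
    \frac{\bs{I}+(n-1)^{2}\eta^{2}\bar{\bs{A}}^{2}}{\bs{I}+n^{2}\eta^{2}\bar{\bs{A}}^{2}}=\bs{I}-\frac{(2n-1)\eta^{2}\bar{\bs{A}}^{2}}{\bs{I}+n^{2}\eta^{2}\bar{\bs{A}}^{2}},
\end{align}
using $(n-1)^{2}=n^{2}-(2n-1)$. Since $\bar{\bs{A}}$ is symmetric and $\bar{\bs{A}}^{2}=\bs{A}^{\rm T}\bs{A}$, the matrix $\bar{\bs{A}}^{2}$ is orthogonally diagonalizable with eigenvalues in $[\lambda_{\min}^{2},\lambda_{\max}^{2}]$. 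Both numerator and denominator are functions of $\bar{\bs{A}}^{2}$, so in its eigenbasis the quadratic form becomes $\sum_{i} f(\eta^{2}\lambda_{i}^{2})\,(\text{coordinate}_{i})^{2}$ with the scalar function $f(s)=1-\frac{(2n-1)s}{1+n^{2}s}$. Hence $\|\bs{z}^{\dagger}_{t+1}\|^{2}\le \max_{i} f(\eta^{2}\lambda_{i}^{2})\,\|\bs{z}_{t}\|^{2}$.

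Next I would study the monotonicity of $f$ on $s\ge0$. Since $s\mapsto s/(1+n^{2}s)$ is increasing, for $2n-1\ge0$ the function $f$ is nonincreasing. Therefore its maximum over the spectrum is attained at the smallest eigenvalue, namely at $s_{0}=\eta^{2}\lambda_{\min}^{2}$. This gives $\|\bs{z}^{\dagger}_{t+1}\|\le\sqrt{f(s_{0})}\,\|\bs{z}_{t}\|$, and note that $f(s_{0})\ge0$ automatically because it is a ratio of positive quantities.

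Then I would apply two elementary bounds. The first is $\sqrt{1-a}\le 1-a/2$ for $a\le1$, which gives $\sqrt{f(s_{0})}\le 1-\frac{(2n-1)s_{0}}{2(1+n^{2}s_{0})}$. The second is $\frac{1}{1+n^{2}s_{0}}\ge 1-n^{2}s_{0}$, combined with $2n-1\ge0$ to keep the direction of the inequality. Together these yield
\begin{align}
    \sqrt{f(s_{0})}\le 1-\frac{2n-1}{2}s_{0}+\frac{n^{2}(2n-1)}{2}s_{0}^{2},
\end{align}
which is exactly $\rm{LCR}_{\rm{EPP}}(n)$. The step-size condition $\eta\le1/(n\lambda_{\min})$ ensures $n^{2}s_{0}\le1$. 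This is the regime in which the expansion is meaningful, and in it $\rm{LCR}_{\rm{EPP}}(n)\le1$.

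The main obstacle is the sign of $2n-1$. The monotonicity argument, and hence the reduction to $\lambda_{\min}$, only works for $n\ge1/2$. For $n<1/2$ the function $f$ is increasing, so the worst eigenvalue is $\lambda_{\max}$ and the stated bound in terms of $\lambda_{\min}$ need not hold. I would therefore carry out the proof for $n\ge1/2$, which covers all cases used later ($n=1$ and $n=m/2+1$). I would also flag that the ``for all $n\in\mathbb{R}$'' in the statement should be read with this restriction, or else handled separately by bounding with $\lambda_{\max}$ instead.
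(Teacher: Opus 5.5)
Your proof is correct and follows essentially the same route as the paper. The paper also reduces the quadratic form from Lemma~\ref{lem_dynamics_epp} to the scalar ratio $\frac{1+a}{1+b}$ with $a=(n-1)^{2}\eta^{2}\lambda_{\min}^{2}$ and $b=n^{2}\eta^{2}\lambda_{\min}^{2}$, and then uses $\frac{1+a}{1+b}\le 1-(b-a)(1-b)\le\bigl(1-\tfrac{1}{2}(b-a)(1-b)\bigr)^{2}$, which is your pair of elementary bounds applied in the opposite order (linearize first, then take the square root).

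Your caveat $n\ge 1/2$ is well founded. The paper's proof relies on it implicitly when it replaces $\bar{\bs{A}}^{2}$ by $\lambda_{\min}^{2}$, which needs $a\le b$, i.e.\ $2n-1\ge 0$. So the ``for all $n\in\mathbb{R}$'' in the statement is indeed an overstatement.
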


\begin{proof}
By Lemma \ref{lem_dynamics_epp}, we obtain
\begin{align}
    \|\bs{z}^{\dagger}_{t+1}\|^{2}&=\bs{z}_{t}^{\rm T}\frac{\bs{I}+(n-1)^{2}\eta^{2}\bar{\bs{A}}^{2}}{\bs{I}+n^{2}\eta^{2}\bar{\bs{A}}^{2}}\bs{z}_{t} \nonumber\\
    &=\frac{1}{n^{2}}\bs{z}_{t}^{\rm T}\left((n-1)^{2}\bs{I}+\frac{(n^{2}-(n-1)^{2})\bs{I}}{\bs{I}+n^{2}\eta^{2}\bar{\bs{A}}^{2}}\right)\bs{z}_{t} \nonumber\\
    &\le\frac{1}{n^{2}}\left((n-1)^{2}+\frac{n^{2}-(n-1)^{2}}{1+n^{2}\eta^{2}\lambda_{\min}^{2}}\right)\|\bs{z}_{t}\|^{2} \nonumber\\
    &=\frac{1+(n-1)^{2}\eta^{2}\lambda_{\min}^{2}}{1+n^{2}\eta^{2}\lambda_{\min}^{2}}\|\bs{z}_{t}\|^{2} \label{epp_tight} \\
    &\le \rm{LCR}_{\rm{EPP}}(n)^{2}\|\bs{z}_{t}\|^{2}.
\end{align}
In the final inequality, we used that the inequality (holding for all $0\le a\le b\le 1$)
\begin{align}
    \frac{1+a}{1+b}& \le 1-(b-a)(1-b) \nonumber\\
    & \le\left(1-\frac{1}{2}(b-a)(1-b)\right)^{2}.
\end{align}
for $a=(n-1)^{2}\eta^{2}\lambda_{\min}^{2}$ and $b=n^{2}\eta^{2}\lambda_{\min}^{2}$.
\end{proof}

\paragraph{Interpretation:} Lemma \ref{lem_analysis_epp} is interpreted as $\rm{LCR}_{\rm{EPP}}(n)=1-O(n\eta^{2})$, meaning that EPP achieves linear convergence under sufficiently small step sizes. Furthermore, it also shows that a larger $n$ leads to faster convergence. Thus, if some external factor --- just like delayed feedback --- requires a small step size, EPP with extra prediction ($n>1$) outperforms that with next-step prediction ($n=1$, corresponding to classical PP). We note that if no such factor exists, EPP no longer improves PP. Indeed, the tighter rate, i.e., Eq.~\eqref{epp_tight}, shows that PP ($n=1$) converges arbitrarily fast in the limit of $\eta\to\infty$.

\begin{figure*}[h!]
    \centering
    \includegraphics[width=0.6\hsize]{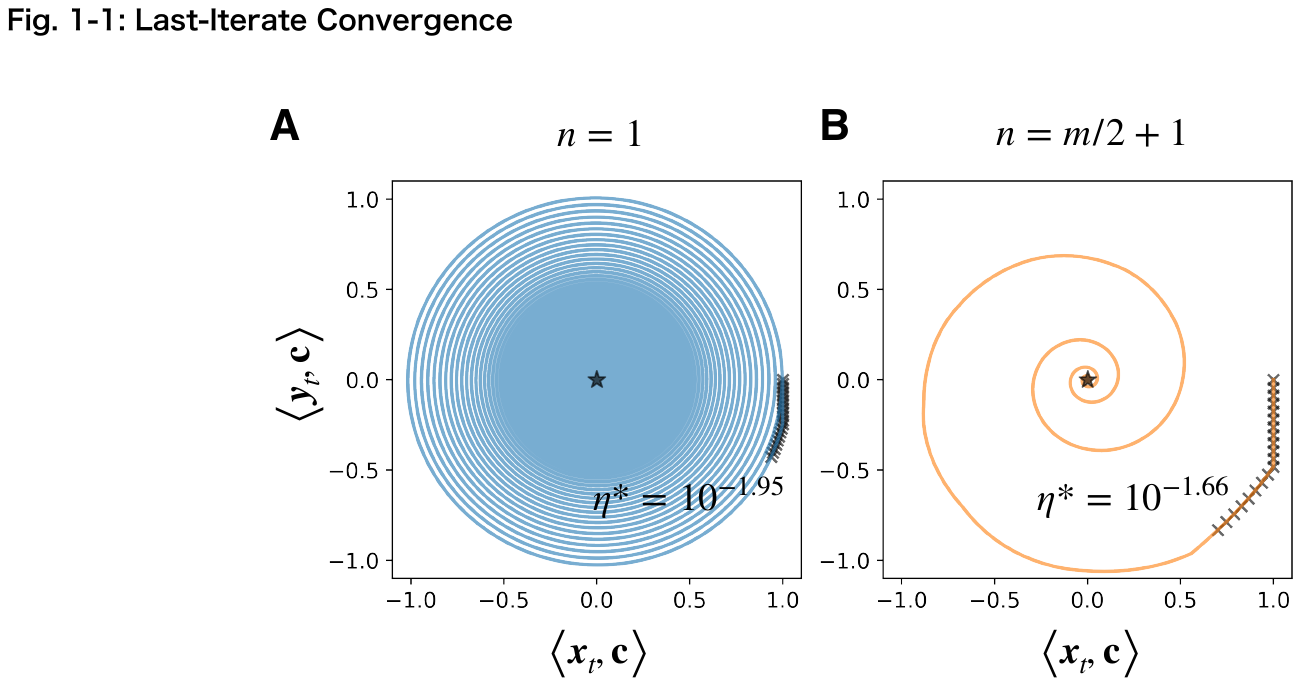}
    \caption{Convergence in Matching Pennies with delayed feedback. We set the delay as $m=10$ and the initial state as $(\bs{x}_{0},\bs{y}_{0})=({\bf c}/2,\bs{0})$. In both panels, the solid lines are the trajectories on the plane of $\big<\bs{x}_{t},{\bf c}\big>$ and $\big<\bs{y}_{t},{\bf c}\big>$. The black star markers are the Nash equilibria which satisfy $\big<\bs{x}_{*},{\bf c}\big>=\big<\bs{y}_{*},{\bf c}\big>=0$. In {\bf A}, the extra-optimistic weight is set to the minimum necessary value, $n=1$. In {\bf B}, it is to $n=m/2+1=6$. The step size is respectively fine-tuned as $\eta=10^{-1.95}$ in {\bf A} and $\eta=10^{-1.66}$ in {\bf B}. The black cross marks indicate the first $20$ steps of learning, which visualize that learning proceeds more quickly in {\bf B} than in {\bf A}. The trajectory also requires fewer cycles until convergence in {\bf B} than in {\bf A}.
    }
    \label{F01}
\end{figure*}

\begin{figure*}[h!]
    \centering
    \includegraphics[width=0.65\hsize]{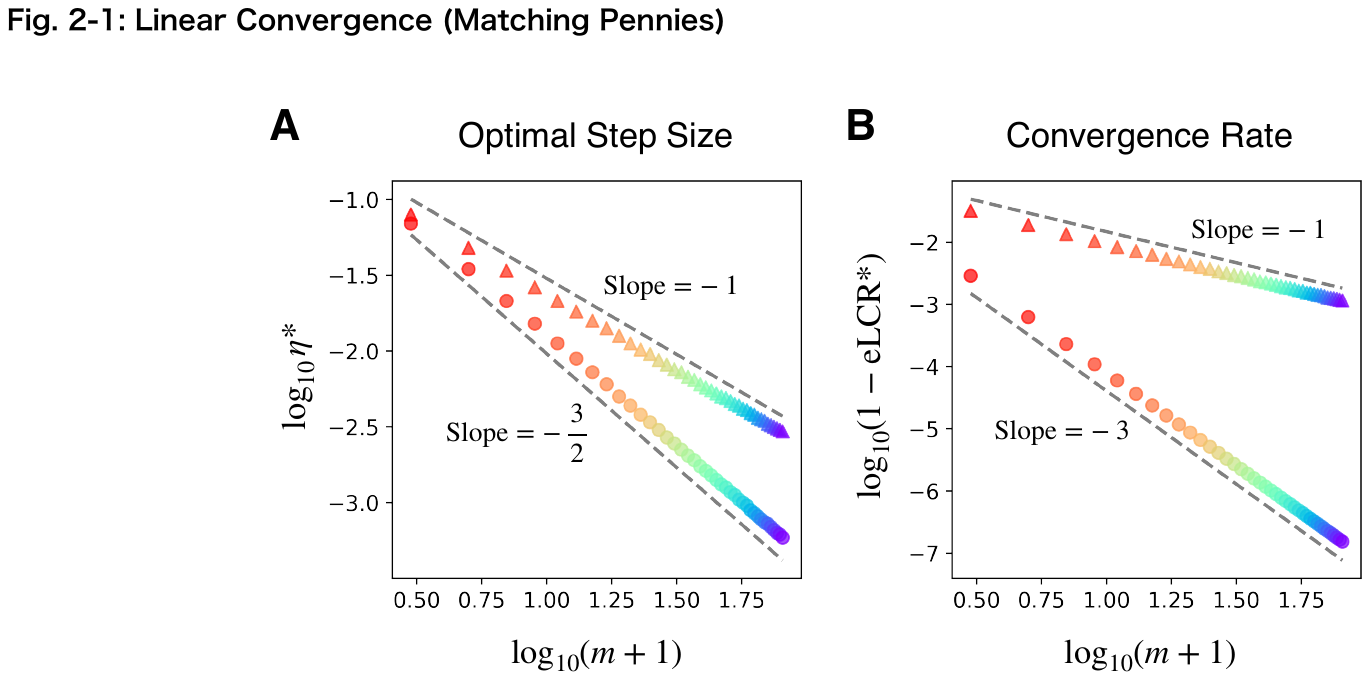}
    \caption{Optimal step size ({\bf A}) and optimal linear convergence rate ({\bf B}) in Matching Pennies. The circle and triangle markers indicate $n=1$ (next-step prediction) and $n=m/2+1$ (extra prediction), respectively. We consider the delays from $m=2$ (red) to $m=80$ (purple). In {\bf A}, the horizontal and vertical axes indicate logarithmic delay $\log_{10}(m+1)$ and logarithmic optimal step size $\log_{10}\eta^{*}$. The gray broken lines fit the circle markers with slope $-3/2$ and the triangle markers with slope $-1$, respectively. This means that the extra prediction admits a larger step size than the next-step prediction. In {\bf B}, the vertical axis indicates logarithmic linear convergence rate $\log_{10}(1-\rm{eLCR}^{*})$. The gray broken lines fit the circle markers with slope $-3$ and the triangle markers with slope $-1$, respectively. This means that the extra prediction converges faster than the next-step prediction.
    }
    \label{F02}
\end{figure*}

\begin{figure*}[h!]
    \centering
    \includegraphics[width=0.65\hsize]{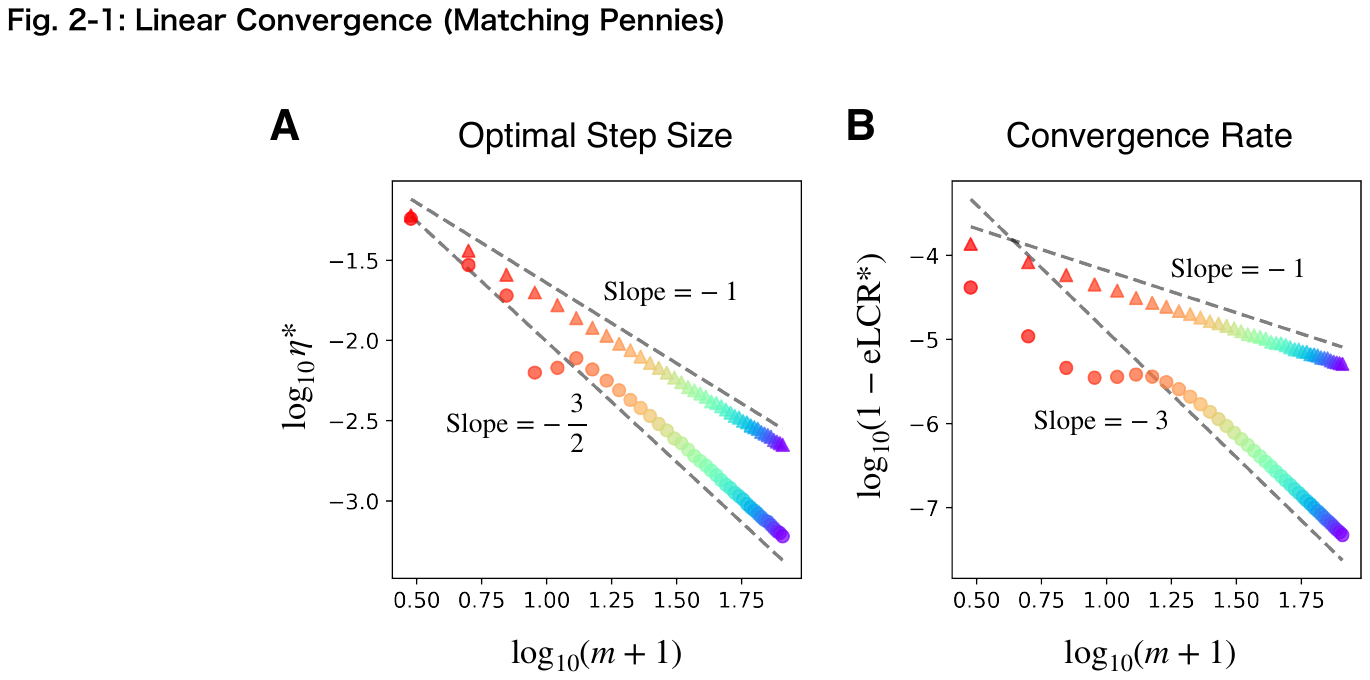}
    \caption{Optimal step size ({\bf A}) and optimal linear convergence rate ({\bf B}) in $5\times 5$ random matrix game. How to read the panels is the same as Figure \ref{F02}. The initial state is $(\bs{x}_{0},\bs{y}_{0})=(\bs{1},\bs{0})$.
    }
    \label{F03}
\end{figure*}

\subsection{Evaluation of Error} \label{ss_error}
The difference between EPP (i.e., $\bs{z}^{\dagger}_{t+1}$) and WOGDA (i.e., $\bs{z}_{t+1}$) is evaluated as follows.

\begin{lemma}[Dynamics of Error Term] \label{lem_dynamics_error}
For all $n\in\mathbb{N}$ and $0\le \eta$, it holds
\begin{align}
    \|\bs{z}^{\dagger}_{t+1}-\bs{z}_{t+1}\|^{2}&=\Delta^{2}\bs{\mc{Z}}_{t}^{\rm T}\frac{\eta^2\bar{\bs{A}}^{2}}{\bs{I}+n^{2}\eta^2\bar{\bs{A}}^{2}}\Delta^{2}\bs{\mc{Z}}_{t}, \label{analysis_error}\\
    \bs{\mc{Z}}_{t}&=n\sum_{s=1}^{m+1}\bs{z}_{t-s}+\sum_{s=2}^{m+1}\sum_{r=s}^{m+1}\bs{z}_{t-r}.
\end{align}
\end{lemma}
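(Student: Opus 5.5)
The plan is to mirror the proof sketch of Lemma \ref{lem_dynamics_epp}: write the error $\bs{z}^{\dagger}_{t+1}-\bs{z}_{t+1}$ as $(\bs{I}-n\eta\bs{A})^{-1}$ applied to an explicit residual, and show this residual is $\eta\bs{A}\Delta^{2}\bs{\mc{Z}}_{t}$.

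First, rewrite \ref{EPP} as $(\bs{I}-n\eta\bs{A})\bs{z}^{\dagger}_{t+1}=(\bs{I}-(n-1)\eta\bs{A})\bs{z}_{t}$. Subtracting $(\bs{I}-n\eta\bs{A})\bs{z}_{t+1}$ from both sides and regrouping gives
\begin{align}
(\bs{I}-n\eta\bs{A})(\bs{z}^{\dagger}_{t+1}-\bs{z}_{t+1})=-\Delta\bs{z}_{t}+\eta\bs{A}\bs{z}_{t}+n\eta\bs{A}\Delta\bs{z}_{t}. \nonumber
\end{align}
Next, substitute the WOGDA recursion \eqref{WOGDA_delta}, $\Delta\bs{z}_{t}=\eta\bs{A}\bs{z}_{t-m}+(n+m)\eta\bs{A}\Delta\bs{z}_{t-m-1}$, for the first term $-\Delta\bs{z}_{t}$ only. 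Every term then carries a factor $\eta\bs{A}$, and the right-hand side becomes
\begin{align}
\eta\bs{A}\big[(\bs{z}_{t}-\bs{z}_{t-m})+n(\Delta\bs{z}_{t}-\Delta\bs{z}_{t-m-1})-m\Delta\bs{z}_{t-m-1}\big]. \nonumber
\end{align}

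The combinatorial step is to identify the bracket with a second difference. Using $\bs{z}_{t}-\bs{z}_{t-m}=\sum_{k=1}^{m}\Delta\bs{z}_{t-k}$, the bracket equals
\begin{align}
n(\Delta\bs{z}_{t}-\Delta\bs{z}_{t-m-1})+\sum_{k=1}^{m}(\Delta\bs{z}_{t-k}-\Delta\bs{z}_{t-m-1}). \nonumber
\end{align}
Each parenthesized difference of first differences telescopes into a sum of second differences $\Delta^{2}\bs{z}$. Collecting them should reproduce the single-sum part $n\sum_{s=1}^{m+1}\bs{z}_{t-s}$ and the double-sum part $\sum_{s=2}^{m+1}\sum_{r=s}^{m+1}\bs{z}_{t-r}$ of $\bs{\mc{Z}}_{t}$, after applying $\Delta^{2}$ with the paper's index convention. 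This index bookkeeping is the main obstacle: whether $\Delta^{2}\bs{\mc{Z}}_{t}$ is meant as $\bs{\mc{Z}}_{t+2}-2\bs{\mc{Z}}_{t+1}+\bs{\mc{Z}}_{t}$ with these exact offsets must be checked. I would settle it first by verifying $m=0$ and $m=1$ by hand, then matching coefficients of each $\bs{z}_{t-r}$ in general.

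Finally, $\bs{z}^{\dagger}_{t+1}-\bs{z}_{t+1}=(\bs{I}-n\eta\bs{A})^{-1}\eta\bs{A}\Delta^{2}\bs{\mc{Z}}_{t}$, where the inverse exists because $\bs{A}$ is skew-symmetric. Taking the squared norm, we use the following facts:
\begin{itemize}
\item $(\bs{I}-n\eta\bs{A})^{\rm T}(\bs{I}-n\eta\bs{A})=\bs{I}+n^{2}\eta^{2}\bar{\bs{A}}^{2}$, because the cross terms cancel since $\bs{A}^{\rm T}=-\bs{A}$;
\item $\bs{A}^{\rm T}\bs{A}=\bar{\bs{A}}^{2}$;
\item all these matrices are polynomials in $\bs{A}$, so they commute.
\end{itemize}
These give exactly $\Delta^{2}\bs{\mc{Z}}_{t}^{\rm T}\frac{\eta^{2}\bar{\bs{A}}^{2}}{\bs{I}+n^{2}\eta^{2}\bar{\bs{A}}^{2}}\Delta^{2}\bs{\mc{Z}}_{t}$, as in the proof of Lemma \ref{lem_dynamics_epp}.
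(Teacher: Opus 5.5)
Your proposal is correct and is essentially the paper's own argument: subtract the WOGDA step from EPP, move the $n\eta\bs{A}(\bs{z}^{\dagger}_{t+1}-\bs{z}_{t+1})$ term to the left to get $(\bs{I}-n\eta\bs{A})^{-1}\eta\bs{A}$ applied to $n\Delta\bs{z}_{t}+\sum_{k=1}^{m}\Delta\bs{z}_{t-k}-(n+m)\Delta\bs{z}_{t-m-1}$, then evaluate the squared norm using $\bs{A}^{\rm T}=-\bs{A}$ exactly as in Lemma~\ref{lem_dynamics_epp}. The index bookkeeping you left open does close under the paper's forward convention $\Delta\bs{a}_{t}=\bs{a}_{t+1}-\bs{a}_{t}$: telescoping gives $\sum_{r=s}^{m+1}\Delta^{2}\bs{z}_{t-r}=\Delta\bs{z}_{t-s+1}-\Delta\bs{z}_{t-m-1}$, so the case $s=1$ (weighted by $n$) yields your term $n(\Delta\bs{z}_{t}-\Delta\bs{z}_{t-m-1})$, and the cases $s=k+1$ with $k=1,\dots,m$ yield your sum $\sum_{k=1}^{m}(\Delta\bs{z}_{t-k}-\Delta\bs{z}_{t-m-1})$, i.e.\ the bracket is exactly $\Delta^{2}\bs{\mc{Z}}_{t}$.
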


\begin{proof}
We can directly obtain
\begin{align}
    &\bs{z}^{\dagger}_{t+1}-\bs{z}_{t+1} \nonumber\\
    &=\{\eta\bs{A}\bs{z}_{t}+n\eta\bs{A}(\bs{z}^{\dagger}_{t+1}-\bs{z}_{t})\} \nonumber\\
    &\hspace{0.5cm}-\{\eta\bs{A}\bs{z}_{t-m}+(n+m)\eta\bs{A}\Delta\bs{z}_{t-m-1}\} \nonumber\\
    &=\eta\bs{A}(\bs{z}^{\dagger}_{t+1}-\bs{z}_{t+1})+\eta\bs{A}\{n\Delta\bs{z}_{t} \nonumber\\
    &\hspace{0.5cm}+(\Delta\bs{z}_{t-1}+\cdots+\Delta\bs{z}_{t-m})-(n+m)\Delta\bs{z}_{t-m-1}\} \label{analysis_error2}\\
    &=\eta\bs{A}(\bs{z}^{\dagger}_{t+1}-\bs{z}_{t+1})+\eta\bs{A}\Delta^{2}\bs{\mc{Z}}_{t}.
\end{align}
Here, we remark that in Eq.~\eqref{analysis_error2}, the numbers of $+\Delta\bs{z}$ and $-\Delta\bs{z}$ are balanced, it degenerates to a finite number of $\Delta^{2}\bs{z}$. By using the fraction notation of matrices, we can write
\begin{align}
    \bs{z}^{\dagger}_{t+1}-\bs{z}_{t+1}=\frac{\eta\bs{A}}{\bs{I}-n\eta\bs{A}}\Delta^{2}\bs{\mc{Z}}_{t},
\end{align}
leading to Eq.~\eqref{analysis_error} by performing the same operation as the proof of Lemma \ref{lem_dynamics_epp}.
\end{proof}

Here, we note that $\|\bs{z}^{\dagger}_{t+1}-\bs{z}_{t+1}\|$ is described as the finite number of $\Delta^{2}\bs{z}_{s}$, which is evaluated as sufficiently small below.

\begin{lemma}[Evaluation of Error Term] \label{lem_evaluation_error}
For all $j,n\in\mathbb{N}$ and $0\le\eta\le 1/(2(n+m)\lambda_{\max})$, it holds
\begin{align}
    &\|\bs{z}^{\dagger}_{t+1}-\bs{z}_{t+1}\|\le\rm{ER}(j,n)\max_{2m\le s\le 2j(m+1)}\|\bs{z}_{t-s}\|, \label{ErrorRateBound}\\
    &\rm{ER}(j,n)=2(2n+m)(m+1)\eta^{3}\lambda_{\max}^{3} \nonumber\\
    &\hspace{1.9cm}+8(n+m)^{j+1}\eta^{j+2}\lambda_{\max}^{j+2} \nonumber\\
    &\hspace{1.9cm}+2(2n+m+1)(n+m)^{2j}\eta^{2j+1}\lambda_{\max}^{2j+1}. \tag{Error Rate}
\end{align}
\end{lemma}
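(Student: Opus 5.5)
Starting from Lemma \ref{lem_dynamics_error}, the operator $\eta\bar{\bs{A}}/(\bs{I}+n^{2}\eta^{2}\bar{\bs{A}}^{2})^{1/2}$ has norm at most $\eta\lambda_{\max}$. Hence
\[
\|\bs{z}^{\dagger}_{t+1}-\bs{z}_{t+1}\|\le\eta\lambda_{\max}\|\Delta^{2}\bs{\mc{Z}}_{t}\|.
\]
By the explicit form \eqref{analysis_error2}, $\Delta^{2}\bs{\mc{Z}}_{t}$ is a signed combination of $n\Delta\bs{z}_{t}$, the terms $\Delta\bs{z}_{t-1},\dots,\Delta\bs{z}_{t-m}$, and $-(n+m)\Delta\bs{z}_{t-m-1}$. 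Equivalently, it is a combination of second differences $\Delta^{2}\bs{z}_{t-s}$, $1\le s\le m+1$, with total coefficient mass about $(2n+m)(m+1)/2$. So the whole task reduces to bounding $\Delta\bs{z}$ and $\Delta^{2}\bs{z}$ in terms of past iterates, which I would do by unrolling the recursion \eqref{WOGDA_delta}.

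Step 1 (first differences). From \eqref{WOGDA_delta},
\[
\|\Delta\bs{z}_{t}\|\le\eta\lambda_{\max}\|\bs{z}_{t-m}\|+(n+m)\eta\lambda_{\max}\|\Delta\bs{z}_{t-m-1}\|.
\]
Iterating this $k$ times gives a geometric sum in $q:=(n+m)\eta\lambda_{\max}\le1/2$. Thus $\|\Delta\bs{z}_{t}\|\le 2\eta\lambda_{\max}\max\|\bs{z}_{t-s}\|+q^{k}\|\Delta\bs{z}_{t-k(m+1)}\|$, and the remainder is bounded by $2\max\|\bs{z}\|$ on an earlier window.

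Step 2 (second differences). Differencing \eqref{WOGDA_delta} once more gives
\[
\Delta^{2}\bs{z}_{t}=\eta\bs{A}\Delta\bs{z}_{t-m}+(n+m)\eta\bs{A}\Delta^{2}\bs{z}_{t-m-1}.
\]
Unroll this $j$ times. The driving terms are $\eta\bs{A}\Delta\bs{z}$, which by Step 1 are $O(\eta^{2}\lambda_{\max}^{2})\max\|\bs{z}\|$. The remainder is $q^{j}\Delta^{2}\bs{z}_{t-j(m+1)}$, where I bound $\Delta^{2}\bs{z}$ crudely by twice a first difference and apply Step 1 again, unrolled another $j$ times.

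Multiplying through by the prefactor $\eta\lambda_{\max}$ and the coefficient mass $(2n+m)(m+1)$ produces the three terms of $\rm{ER}(j,n)$:
\begin{itemize}
\item the main term $2(2n+m)(m+1)\eta^{3}\lambda_{\max}^{3}$, from the $\eta^{2}$ driving terms;
\item $8(n+m)^{j+1}\eta^{j+2}\lambda_{\max}^{j+2}$, from $q^{j}$ times $\eta\lambda_{\max}$ times a first-difference bound (the extra factor $n+m$ comes from the weight on $\Delta\bs{z}_{t-m-1}$);
\item $(n+m)^{2j}\eta^{2j+1}\lambda_{\max}^{2j+1}$, from the $2j$-fold unrolled tails, where only the crude bound $\|\Delta\bs{z}\|\le 2\max\|\bs{z}\|$ is used.
\end{itemize}
The window indices $2m\le s\le2j(m+1)$ come from tracking the lags: the earliest index is $t-m$ shifted once more by $m$ through $\Delta\bs{z}_{t-m}$, and the latest is $2j$ unrollings of length $m+1$.

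The main obstacle is bookkeeping rather than ideas. I must arrange the unrolling so that exactly powers $j+1$ and $2j$ appear and the geometric factors are absorbed into the constants $2$ and $8$ using $q\le1/2$. I must also check that every index referenced lies in the stated window. I would first write out $\Delta^{2}\bs{\mc{Z}}_{t}$ explicitly as a sum of $\Delta^{2}\bs{z}$ terms to fix the coefficient mass, then carry out the unrollings with the sum $\sum q^{i}\le2$ applied uniformly. If the constants come out slightly larger, I would loosen them, since only the scaling in $\eta$, $n$ and $m$ matters downstream.
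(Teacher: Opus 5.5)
\textbf{Gap: taking norms before summing the $\bs{\mc{Z}}$-weights.} Your Steps 1 and 2 are correct, and the leading term $2(2n+m)(m+1)\eta^{3}\lambda_{\max}^{3}$ does come out of them. The problem is the reduction ``bound each $\|\Delta^{2}\bs{z}_{t-s}\|$ separately and multiply by the coefficient mass''. It cannot give the stated $\mathrm{ER}(j,n)$.

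Carry out your scheme with $q\le\frac12$ and $M$ the windowed maximum. You get $\|\Delta^{2}\bs{z}_{\tau}\|\le(4\eta^{2}\lambda_{\max}^{2}+8\eta\lambda_{\max}q^{j}+4q^{2j})M$. Multiplying by $\eta\lambda_{\max}(n+\frac{m}{2})(m+1)$ gives tail terms $4(2n+m)(m+1)(n+m)^{j}\eta^{j+2}\lambda_{\max}^{j+2}$ and $2(2n+m)(m+1)(n+m)^{2j}\eta^{2j+1}\lambda_{\max}^{2j+1}$. These exceed the second and third terms of $\mathrm{ER}(j,n)$ by a factor of order $m+1$. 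That is not a constant, so ``loosening constants'' does not recover the lemma.

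Your remark that the factor $n+m$ comes from the weight on $\Delta\bs{z}_{t-m-1}$ also cannot be realized norm-wise. If you take norms in the first-difference form of $\Delta^{2}\bs{\mc{Z}}_{t}$ (mass $2(n+m)$), the balanced weights no longer cancel. The bound then degrades to $4(n+m)\eta^{2}\lambda_{\max}^{2}$, losing a power of $\eta$.

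\textbf{The missing idea.} This is how the paper argues: sum over the $\bs{\mc{Z}}$-weights \emph{before} taking norms. Unroll your Step-2 recursion $j$ times, and substitute the $j$-fold unrolled formula for $\Delta\bs{z}$ into both the driving terms and the remainder. This yields
\begin{align*}
\Delta^{2}\bs{z}_{\tau}={}&\sum_{k,l=1}^{j}(n+m)^{k+l-2}(\eta\bs{A})^{k+l}\bs{z}_{\tau-(k+l)(m+1)+2}\\
&+2\sum_{k=1}^{j}(n+m)^{j+k-1}(\eta\bs{A})^{j+k}\Delta\bs{z}_{\tau-(j+k)(m+1)+1}+(n+m)^{2j}(\eta\bs{A})^{2j}\Delta^{2}\bs{z}_{\tau-2j(m+1)}.
\end{align*}
This identity is linear and shift-invariant in $\tau$. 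Apply it to $\Delta^{2}\bs{\mc{Z}}_{t}=\sum_{s=1}^{m+1}c_{s}\Delta^{2}\bs{z}_{t-s}$, with $c_{s}=n+s-1$. You get the same identity with $\bs{z},\Delta\bs{z},\Delta^{2}\bs{z}$ replaced by $\bs{\mc{Z}},\Delta\bs{\mc{Z}},\Delta^{2}\bs{\mc{Z}}$ at shifted times.

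Now only the $\bs{\mc{Z}}$-terms, which already carry $\eta^{2}$, pay the full mass $(n+\frac{m}{2})(m+1)$. By telescoping, $\Delta\bs{\mc{Z}}_{\tau}=n\bs{z}_{\tau}+\sum_{s=1}^{m}\bs{z}_{\tau-s}-(n+m)\bs{z}_{\tau-m-1}$ has mass only $2(n+m)$. Likewise $\Delta^{2}\bs{\mc{Z}}_{\tau}$ has mass $2(2n+m+1)$. With $\sum_{k}q^{k-1}\le 2$ this produces exactly the three terms of $\mathrm{ER}(j,n)$. For example, the $\Delta\bs{\mc{Z}}$-terms contribute
\[
\eta\lambda_{\max}\cdot 2\cdot 2(n+m)\sum_{k=1}^{j}(n+m)^{j+k-1}(\eta\lambda_{\max})^{j+k}\le 8(n+m)^{j+1}\eta^{j+2}\lambda_{\max}^{j+2}.
\]
Your weaker bound would still be enough for the two main theorems after adjusting constants, but it is not the stated lemma.

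\textbf{Window.} Separately, your window argument forgets that $\bs{\mc{Z}}$ itself spans $m+1$ steps. The referenced indices therefore reach $s=(2j+1)(m+1)$, not $2j(m+1)$. The paper's derivation has the same slack; it only enlarges the initial window $\|\tilde{\bs{z}}_{0}\|$ in the theorems.
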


{\it Proof Sketch.} By recursively substituting Eq.~\eqref{WOGDA_delta} $j$ times, we obtain
\begin{align}
    \Delta\bs{z}_{t}=&\ \sum_{k=1}^{j}(n+m)^{k-1}\eta^{k}\bs{A}^{k}\bs{z}_{t-k(m+1)+1} \nonumber\\
    &\ +(n+m)^{j}\eta^{j}\bs{A}^{j}\Delta\bs{z}_{t-j(m+1)}. \label{recursive}
\end{align}
$\rm{ER}(j,n)$ is obtained by applying Eq.~\eqref{recursive} to $\Delta^{2}\bs{\mc{Z}}_{t}$ twice and upper-bounding all $\bs{z}_{t-s}$ by $\max_{s}\bs{z}_{t-s}$. Here, the first term of $\rm{ER}(j,n)$ corresponds to the product of the first terms of Eq.~\eqref{recursive}. The second term of $\rm{ER}(j,n)$ corresponds to the product of the first and second terms of Eq.~\eqref{recursive}. The third term of $\rm{ER}(j,n)$ corresponds to the product of the second terms of Eq.~\eqref{recursive}.
\qed

\subsection{Combination of EPP and Error} \label{ss_combination}
By combining the convergence term by EPP and the error term, we can analyze WOGDA as follows.

\begin{theorem}[Analysis of WOGDA] \label{thm_convergence_wogda}
For all $j,n\in\mathbb{N}$ and $0\le\eta\le 1/(2(n+m)\lambda_{\max})$, we derive
\begin{align}
    &\|\bs{z}_{t+1}\|\le\rm{LCR}_{\rm{WOGDA}}(j,n)\max_{0\le s\le 2j(m+1)}\|\bs{z}_{t-s}\|, \label{LCR_WOGDA}\\
    &\rm{LCR}_{\rm{WOGDA}}(j,n)=\rm{LCR}_{\rm{EPP}}(n)+\rm{ER}(j,n). \tag{Linear Convergence Rate of WOGDA}
\end{align}
\end{theorem}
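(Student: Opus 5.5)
The plan is a triangle-inequality decomposition. We split $\bs{z}_{t+1}$ as $\bs{z}_{t+1}=\bs{z}^{\dagger}_{t+1}-(\bs{z}^{\dagger}_{t+1}-\bs{z}_{t+1})$, where $\bs{z}^{\dagger}_{t+1}$ is the \ref{EPP} update computed from the actual WOGDA iterate $\bs{z}_{t}$. This gives
\begin{align}
    \|\bs{z}_{t+1}\|\le\|\bs{z}^{\dagger}_{t+1}\|+\|\bs{z}^{\dagger}_{t+1}-\bs{z}_{t+1}\|.
\end{align}
The first term will be handled by Lemma \ref{lem_analysis_epp} and the second by Lemma \ref{lem_evaluation_error}. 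After that, the two right-hand sides only need to be dominated by a common maximum over the window $0\le s\le 2j(m+1)$.

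\textbf{Step 1 (step-size compatibility).} Lemma \ref{lem_analysis_epp} requires $\eta\le 1/(n\lambda_{\min})$, while Theorem \ref{thm_convergence_wogda} assumes $\eta\le 1/(2(n+m)\lambda_{\max})$. Since $\lambda_{\min}\le\lambda_{\max}$ and $n\le 2(n+m)$, we have $1/(2(n+m)\lambda_{\max})\le 1/(n\lambda_{\min})$, so the assumed range lies inside the admissible range of Lemma \ref{lem_analysis_epp}. The same assumed range is exactly the hypothesis of Lemma \ref{lem_evaluation_error}, so that lemma applies verbatim. I would also note that Lemma \ref{lem_analysis_epp} holds for an arbitrary input vector. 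Hence it applies to $\bs{z}_{t}$ generated by WOGDA even though WOGDA is not itself an EPP trajectory.

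\textbf{Step 2 (bounding each term by the window maximum).} Lemma \ref{lem_analysis_epp} gives $\|\bs{z}^{\dagger}_{t+1}\|\le\rm{LCR}_{\rm{EPP}}(n)\|\bs{z}_{t}\|\le\rm{LCR}_{\rm{EPP}}(n)\max_{0\le s\le 2j(m+1)}\|\bs{z}_{t-s}\|$, since $s=0$ lies in the window. Here I would check that $\rm{LCR}_{\rm{EPP}}(n)\ge 0$, so that multiplying by it preserves the inequality. This holds because in the proof of Lemma \ref{lem_analysis_epp} the quantity $\rm{LCR}_{\rm{EPP}}(n)$ arises as $1-\tfrac12(b-a)(1-b)$ with $0\le b-a\le 1$ and $0\le 1-b\le 1$, so it lies in $[1/2,1]$. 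For the second term, Lemma \ref{lem_evaluation_error} gives a bound by $\rm{ER}(j,n)\max_{2m\le s\le 2j(m+1)}\|\bs{z}_{t-s}\|$, and $\rm{ER}(j,n)\ge 0$ trivially. This window is contained in $[0,2j(m+1)]$, so the maximum can only grow when we pass to the larger window. Adding the two bounds yields Eq.~\eqref{LCR_WOGDA} with $\rm{LCR}_{\rm{WOGDA}}(j,n)=\rm{LCR}_{\rm{EPP}}(n)+\rm{ER}(j,n)$.

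\textbf{Main obstacle.} The only delicate point is bookkeeping of the index windows. In Lemma \ref{lem_evaluation_error} the error depends on iterates as far back as $t-2j(m+1)$, which requires $t$ large enough that all referenced indices exist. I would handle this either by restricting to $t\ge 2j(m+1)$ or by the convention that early iterates are defined by the initialization. This is consistent with the $\max_{0\le s\le\cdot}\|\bs{z}_{s}\|$ quantity $\|\tilde{\bs{z}}_{0}\|$ used in the main theorems. Otherwise the theorem is a direct combination of the two preceding lemmas, and no new estimate is needed.
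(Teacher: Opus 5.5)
Your proposal is correct and follows the paper's proof: the triangle inequality $\|\bs{z}_{t+1}\|\le\|\bs{z}^{\dagger}_{t+1}\|+\|\bs{z}^{\dagger}_{t+1}-\bs{z}_{t+1}\|$, with Lemma \ref{lem_analysis_epp} bounding the first term and Lemma \ref{lem_evaluation_error} bounding the second. Your extra checks are the bookkeeping the paper leaves implicit: that $\eta\le 1/(2(n+m)\lambda_{\max})$ implies $\eta\le 1/(n\lambda_{\min})$, that $\mathrm{LCR}_{\mathrm{EPP}}(n)\ge 0$, and that the window $[2m,2j(m+1)]$ lies inside $[0,2j(m+1)]$.
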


\begin{proof}
Eq.~\eqref{LCR_WOGDA} is obvious by the triangle inequality of
\begin{align}
    &\|\bs{z}_{t+1}\|\le \|\bs{z}^{\dagger}_{t+1}\|+\|\bs{z}^{\dagger}_{t+1}-\bs{z}_{t+1}\|,
\end{align}
with the results from Lemma \ref{lem_analysis_epp} and Lemma \ref{lem_evaluation_error}
\begin{align}
    \|\bs{z}^{\dagger}_{t+1}\|&\le \rm{LCR}_{\rm{EPP}}(n)\|\bs{z}_{t}\|, \\
    \|\bs{z}^{\dagger}_{t+1}-\bs{z}_{t+1}\|&\le \rm{ER}(j,n)\max_{2m\le s\le 2j(m+1)}\|\bs{z}_{t-s}\|.
\end{align}
\end{proof}

Finally, Theorem \ref{thm_convergence_wogda} immediately derives Theorems \ref{thm_convergence_pp} and \ref{thm_convergence_epp} by setting the step size properly to cancel out the convergence and error terms. The proof sketches are below.

{\it Proof Sketch of} {\bf Theorem \ref{thm_convergence_pp}}. When $n=1$, WOGDA linearly converges with the rate of $\rm{LCR}_{\rm{EPP}}(1)=1-O(\eta^{2}\lambda_{\min}^{2})$. On the other hand, it also diverges with the rate of $\rm{ER}(j,1)=O(m^{2}\eta^{3}\lambda_{\max}^{3})$. In order to cancels out the terms of $O(\eta^{2}\lambda_{\min}^{2})$ with $O(m^{2}\eta^{3}\lambda_{\max}^{3})$, we should set the step size as $\eta=O(1/(m^{2}\kappa^{2}\lambda_{\max}))$. When $j\ge 2$, the other terms can be ignored with this step size. Therefore, we set $j=2$ at best and obtain the convergence rate as $\rm{LCR}_{\rm{WOGDA}}(2,1)=1-O(\eta^{2}\lambda_{\min}^{2})=\exp(-\Theta(1/(\kappa^{6}m^{4})))$. This convergence rate scales down to $\exp(-\Theta(1/(\kappa^{6}m^{5})))$ because the convergence delays $O(m)$ times by the definition of Eq.~\eqref{LCR_WOGDA}.
\qed

{\it Proof Sketch of} {\bf Theorem \ref{thm_convergence_epp}}. When $n=m/2+1$, WOGDA linearly converges with the rate of $\rm{LCR}_{\rm{EPP}}(m/2+1)=1-O(m\eta^{2}\lambda_{\min}^{2})$. On the other hand, it also diverges by the effect of $\rm{ER}(j,n)$, and the leading term is $O(m^{2j+1}\eta^{2j+1}\lambda_{\max}^{2j+1})$. In order to cancels out the terms of $O(m\eta^{2}\lambda_{\min}^{2})$ with $O(m^{2j+1}\eta^{2j+1}\lambda_{\max}^{2j+1})$, we should set the step size as $\eta=O(1/(m^{\frac{2j}{2j-1}}\kappa^{2}\lambda_{\max}))$, immediately leading to the convergence rate of $\rm{LCR}_{\rm{WOGDA}}(j,m/2+1)=1-O(m\eta^{2}\lambda_{\min}^{2})=\exp(-\Theta(1/(\kappa^{6}m^{1+\frac{2}{2j-1}})))$. This rate scales down to $\exp(-\Theta(1/(\kappa^{6}jm^{2+\frac{2}{2j-1}})))$ because the convergence delays $O(jm)$ times by the definition of Eq.~\eqref{LCR_WOGDA}. It takes the minimum value of $\exp(-\Theta(1/(\kappa^{6}m^{2}\log m)))$ when $j=O(\log m)$.
\qed

\section{Experiments}
This section numerically tests the performance of the WOGDA, based on the two measures of the optimal step size $\eta^{*}$ and the estimated linear convergence rate there $\rm{eLCR}^{*}$. As far as we have conducted, our experiments guarantee that if WOGDA converges with a step size, it also converges with smaller step sizes. Thus, a larger step size contributes not only to faster convergence but also to the stability of convergence.

\paragraph{Experimental Setup:} For each delay $m$ and prediction length $n$, we run WOGDA with $251$ step sizes of $\eta=10^{-1.00}, 10^{-1.01},\cdots, 10^{-3.50}$. We stop the algorithm either $10^{-9}\le\|\bs{z}\|_{t}\le 10^{9}$ or $10^{4}\le t$ no longer holds. We measure the estimated linear convergence rate $\rm{eLCR}$ by the final $100$ steps of $\bs{z}_{t}$. Finally, the optimal step size $\eta^{*}$ is one such that minimize $\rm{eLCR}$, and the optimal estimated linear convergence rate $\rm{eLCR}^{*}$ is the minimum $\rm{eLCR}$.

First, let us pick up Matching Pennies, whose payoff matrix is given by
\begin{align}
    \bs{B}={\bf c}\otimes {\bf c},\quad {\bf c}:=\begin{pmatrix}
        +1 \\
        -1 \\
    \end{pmatrix}. \tag{Matching Pennies} \label{matching_pennies}
\end{align}
Because this payoff matrix is not regular, the Nash equilibrium is not unique but all $(\bs{x}_{*},\bs{y}_{*})$ such that $\big<\bs{x}_{*},{\bf c}\big>=\big<\bs{y}_{*},{\bf c}\big>=0$. We should employ another measure for the distance from these equilibria as $\sqrt{\big<\bs{x}_{t},{\bf c}\big>^{2}+\big<\bs{y}_{t},{\bf c}\big>^{2}}$. Matching Pennies is out of our theoretical scope, but our algorithm remains applicable.

\paragraph{Fast convergence by Extra Prediction:} We consider \ref{matching_pennies} with the delay $m=10$. Figure \ref{F01}-A considers the minimum optimistic weight $n=1$ (next-step prediction), while B considers the extra weight $n=m/2+1=6$ (extra prediction). In both A and B, we tuned the step size to the optimal one. We see two merits to take the extra weight. One is that the optimal step size is larger in B than in A. Another is that convergence to the Nash equilibrium is quick without cycling too many times.


\paragraph{Scaling law:} Keeping using Matching Pennies, we next see how the optimal step size and the linear convergence rate depend on the delay $m$. In Figure \ref{F02}-A, which compares the optimal step size $\eta^{*}$ between $n=1$ (next-step prediction) and $n=m/2+1$ (extra prediction), we see that it shrinks with delay as $\eta^{*}=O(1/m^{3/2})$ in $n=1$, but this shrinking becomes slower in $n=m/2+1$ as $\eta^{*}=O(1/m)$. On the other hand, Figure \ref{F02} compares the optimal linear convergence rate $\rm{eLCR}^{*}$, where we observe that convergence slows down with delay in $n=1$ as $\rm{eLCR}^{*}=1-O(1/m^{3})$, but it is faster in $n=m/2+1$ as $\rm{eLCR}^{*}=1-O(1/m)$.


\paragraph{Random-matrix game:} Not only for Matching Pennies, but we also conduct experiments for a randomly-generated $5\times 5$ matrix game, all of whose elements follows Gaussian distribution $\mc{N}(0,1)$. Figure \ref{F03} observes that the same scaling law as in Matching Pennies holds. An interesting observation is that the scaling law for $n=1$ is disordered a little as if the convergence speed must be lower than in $n=m/2+1$. We see similar scaling law in all $10$ random matrices generated independently.


\section{Discussion}

\paragraph{Comparison between theory and experiment:} Our theoretical and experimental results share the qualitative consensus that extra prediction permits larger step sizes and accelerates convergence. We remark that a quantitative gap remains between these theoretical and experimental results. Regarding the next-step prediction, the optimal step size is $O(1/m^{2})$ in theory versus $O(1/m^{3/2})$ in experiments, and the corresponding convergence rate is $\exp(-\Theta(t/m^{5}))$ in theory versus $\exp(-\Theta(t/m^{3}))$ in experiments. Similarly, for the extra prediction, the optimal step size is $O(1/(m\log m))$ in theory versus $O(1/m)$ in experiment, and the corresponding convergence rate is $\exp(-\Theta(t/(m^{2}\log m)))$ in theory versus $\exp(-\Theta(t/m))$ in experiment. Bridging the gap between these theories and experiments is an open problem. Especially for the extra prediction, however, the rate in theory could potentially be tightened to $\exp(-\Theta(t/(m\log m)))$ by discovering the Lyapunov function and avoiding the bound using the max function in Eq.~\eqref{ErrorRateBound}. 

\paragraph{Alternative algorithm that is memory complex:}
There is a way to force the existing algorithm \ref{OGDA} to adapt to delayed feedback. Suppose $m+1$ \ref{OGDA} algorithms, labeled as $\rm{OGDA}_{0}$ to $\rm{OGDA}_{m}$, in parallel, and the reward at time $t$ is assigned to $\rm{OGDA}_{t\mod (m+1)}$. Although this parallelized algorithm trivially achieves the linear convergence rate of $\exp(-\Theta(t/m))$ by applying the previous paper~\cite{mokhtari2020unified}, it crucially suffers from the memory complexity of $O(m)$. Because this memory complexity is obviously incomparable to ours $O(1)$, we did not compare the parallelized algorithm with ours, but we note that at least, WOGDA also achieves the rate of $\exp(-\Theta(t/m))$ as well as the memory complexity of $O(1)$ in our experiments.

\paragraph{Robustness to stochastic and unknown delays:} Although our algorithm assumes a fixed and known delay $m\in\mathbb{N}$, we hypothesize that exact knowledge of $m$ is not strictly required to enjoy the qualitative benefits of extra prediction. In real-world scenarios such as online advertising, feedback delays are often stochastic or unknown, but their time scale is thought to be easily observable. As shown in the proof sketches in Theorems \ref{thm_convergence_pp} and \ref{thm_convergence_epp}, we have discussed how the term contributing to convergence (i.e., $\rm{LCR}_{\rm{EPP}}(n)$) cancels out that contributing to divergence (i.e., $\rm{ER}(j,n)$), and this discussion is applicable to uncertainty in delays that does not affect the scale. Although a precise step size under unknown delays remains an open problem, this observation suggests that WOGDA could be robust to such uncertain delays.

\paragraph{Conclusion:} 
This paper has established the rate of linear convergence in learning in games with feedback delays. The convergence was proven based on unconstrained bilinear games, complementary to LIC in past constrained settings. This linear rate is a strong contribution to agents quickly stabilizing their optimal strategies. Furthermore, we successfully accelerated this convergence by introducing a novel mechanism that predicts the future reward farther away than the next step. This study developed a strong countermeasure against delayed feedback in learning in games.

\section*{Impact Statement}
This paper presents work whose goal is to advance the field of Learning in Games, Multi-Agent Learning, and Online Learning. There are many potential societal consequences of our work, none which we feel must be specifically highlighted here.


\begin{thebibliography}{33}
\providecommand{\natexlab}[1]{#1}
\providecommand{\url}[1]{\texttt{#1}}
\expandafter\ifx\csname urlstyle\endcsname\relax
  \providecommand{\doi}[1]{doi: #1}\else
  \providecommand{\doi}{doi: \begingroup \urlstyle{rm}\Url}\fi

\bibitem[Agarwal \& Duchi(2011)Agarwal and Duchi]{agarwal2011distributed}
Agarwal, A. and Duchi, J.~C.
\newblock Distributed delayed stochastic optimization.
\newblock In \emph{NeurIPS}, 2011.

\bibitem[Cesa-Bianchi et~al.(2016)Cesa-Bianchi, Gentile, Mansour, and
  Minora]{cesa2016delay}
Cesa-Bianchi, N., Gentile, C., Mansour, Y., and Minora, A.
\newblock Delay and cooperation in nonstochastic bandits.
\newblock In \emph{COLT}, 2016.

\bibitem[Cesa-Bianchi et~al.(2018)Cesa-Bianchi, Gentile, and
  Mansour]{cesa2018nonstochastic}
Cesa-Bianchi, N., Gentile, C., and Mansour, Y.
\newblock Nonstochastic bandits with composite anonymous feedback.
\newblock In \emph{COLT}, 2018.

\bibitem[Cevher et~al.(2023)Cevher, Piliouras, Sim, and
  Skoulakis]{cevher2023min}
Cevher, V., Piliouras, G., Sim, R., and Skoulakis, S.
\newblock Min-max optimization made simple: Approximating the proximal point
  method via contraction maps.
\newblock In \emph{SOSA}, 2023.

\bibitem[Chapelle(2014)]{chapelle2014modeling}
Chapelle, O.
\newblock Modeling delayed feedback in display advertising.
\newblock In \emph{KDD}, 2014.

\bibitem[Daskalakis et~al.(2018)Daskalakis, Ilyas, Syrgkanis, and
  Zeng]{daskalakis2018training}
Daskalakis, C., Ilyas, A., Syrgkanis, V., and Zeng, H.
\newblock Training gans with optimism.
\newblock In \emph{ICLR}, 2018.

\bibitem[Desautels et~al.(2014)Desautels, Krause, and
  Burdick]{desautels2014parallelizing}
Desautels, T., Krause, A., and Burdick, J.~W.
\newblock Parallelizing exploration-exploitation tradeoffs in gaussian process
  bandit optimization.
\newblock \emph{J. Mach. Learn. Res.}, 15\penalty0 (1):\penalty0 3873--3923,
  2014.

\bibitem[Feng et~al.(2023)Feng, Fu, Hu, Li, Panageas, Wang,
  et~al.]{feng2023last}
Feng, Y., Fu, H., Hu, Q., Li, P., Panageas, I., Wang, X., et~al.
\newblock On the last-iterate convergence in time-varying zero-sum games: Extra
  gradient succeeds where optimism fails.
\newblock In \emph{NeurIPS}, 2023.

\bibitem[Feng et~al.(2024)Feng, Li, Panageas, and Wang]{feng2024last}
Feng, Y., Li, P., Panageas, I., and Wang, X.
\newblock Last-iterate convergence separation between extra-gradient and
  optimism in constrained periodic games.
\newblock In \emph{UAI}, 2024.

\bibitem[Fujimoto et~al.(2025{\natexlab{a}})Fujimoto, Abe, and
  Ariu]{fujimoto2025learning}
Fujimoto, Y., Abe, K., and Ariu, K.
\newblock Learning from delayed feedback in games via extra prediction.
\newblock In \emph{NeurIPS}, 2025{\natexlab{a}}.

\bibitem[Fujimoto et~al.(2025{\natexlab{b}})Fujimoto, Ariu, and
  Abe]{fujimoto2025synchronization}
Fujimoto, Y., Ariu, K., and Abe, K.
\newblock Synchronization in learning in periodic zero-sum games triggers
  divergence from nash equilibrium.
\newblock In \emph{AAAI}, 2025{\natexlab{b}}.

\bibitem[Hsieh et~al.(2022)Hsieh, Iutzeler, Malick, and
  Mertikopoulos]{hsieh2022multi}
Hsieh, Y.-G., Iutzeler, F., Malick, J., and Mertikopoulos, P.
\newblock Multi-agent online optimization with delays: Asynchronicity,
  adaptivity, and optimism.
\newblock \emph{Journal of Machine Learning Research}, 23\penalty0
  (78):\penalty0 1--49, 2022.

\bibitem[Joulani et~al.(2013)Joulani, Gyorgy, and
  Szepesv{\'a}ri]{joulani2013online}
Joulani, P., Gyorgy, A., and Szepesv{\'a}ri, C.
\newblock Online learning under delayed feedback.
\newblock In \emph{ICML}, 2013.

\bibitem[Joulani et~al.(2016)Joulani, Gyorgy, and
  Szepesv{\'a}ri]{joulani2016delay}
Joulani, P., Gyorgy, A., and Szepesv{\'a}ri, C.
\newblock Delay-tolerant online convex optimization: Unified analysis and
  adaptive-gradient algorithms.
\newblock In \emph{AAAI}, 2016.

\bibitem[Li et~al.(2019)Li, Chen, and Giannakis]{li2019bandit}
Li, B., Chen, T., and Giannakis, G.~B.
\newblock Bandit online learning with unknown delays.
\newblock In \emph{AISTATS}, 2019.

\bibitem[McMahan \& Streeter(2014)McMahan and Streeter]{mcmahan2014delay}
McMahan, B. and Streeter, M.
\newblock Delay-tolerant algorithms for asynchronous distributed online
  learning.
\newblock In \emph{NeurIPS}, 2014.

\bibitem[Mertikopoulos et~al.(2019)Mertikopoulos, Lecouat, Zenati, Foo,
  Chandrasekhar, and Piliouras]{mertikopoulos2019optimistic}
Mertikopoulos, P., Lecouat, B., Zenati, H., Foo, C.-S., Chandrasekhar, V., and
  Piliouras, G.
\newblock Optimistic mirror descent in saddle-point problems: Going the extra
  (gradient) mile.
\newblock In \emph{ICLR}, 2019.

\bibitem[Mokhtari et~al.(2020)Mokhtari, Ozdaglar, and
  Pattathil]{mokhtari2020unified}
Mokhtari, A., Ozdaglar, A., and Pattathil, S.
\newblock A unified analysis of extra-gradient and optimistic gradient methods
  for saddle point problems: Proximal point approach.
\newblock In \emph{AISTATS}, 2020.

\bibitem[Neu et~al.(2010)Neu, Antos, Gy{\"o}rgy, and
  Szepesv{\'a}ri]{neu2010online}
Neu, G., Antos, A., Gy{\"o}rgy, A., and Szepesv{\'a}ri, C.
\newblock Online markov decision processes under bandit feedback.
\newblock In \emph{NeurIPS}, 2010.

\bibitem[Pike-Burke et~al.(2018)Pike-Burke, Agrawal, Szepesvari, and
  Grunewalder]{pike2018bandits}
Pike-Burke, C., Agrawal, S., Szepesvari, C., and Grunewalder, S.
\newblock Bandits with delayed, aggregated anonymous feedback.
\newblock In \emph{ICML}, 2018.

\bibitem[Piliouras et~al.(2022)Piliouras, Sim, and
  Skoulakis]{piliouras2022beyond}
Piliouras, G., Sim, R., and Skoulakis, S.
\newblock Beyond time-average convergence: Near-optimal uncoupled online
  learning via clairvoyant multiplicative weights update.
\newblock In \emph{NeurIPS}, 2022.

\bibitem[Quanrud \& Khashabi(2015)Quanrud and Khashabi]{quanrud2015online}
Quanrud, K. and Khashabi, D.
\newblock Online learning with adversarial delays.
\newblock In \emph{NeurIPS}, 2015.

\bibitem[Rakhlin \& Sridharan(2013)Rakhlin and
  Sridharan]{rakhlin2013optimization}
Rakhlin, S. and Sridharan, K.
\newblock Optimization, learning, and games with predictable sequences.
\newblock In \emph{NeurIPS}, 2013.

\bibitem[Shamir \& Szlak(2017)Shamir and Szlak]{shamir2017online}
Shamir, O. and Szlak, L.
\newblock Online learning with local permutations and delayed feedback.
\newblock In \emph{ICML}, 2017.

\bibitem[Syrgkanis et~al.(2015)Syrgkanis, Agarwal, Luo, and
  Schapire]{syrgkanis2015fast}
Syrgkanis, V., Agarwal, A., Luo, H., and Schapire, R.~E.
\newblock Fast convergence of regularized learning in games.
\newblock In \emph{NeurIPS}, 2015.

\bibitem[Vernade et~al.(2017)Vernade, Capp{\'e}, and
  Perchet]{vernade2017stochastic}
Vernade, C., Capp{\'e}, O., and Perchet, V.
\newblock Stochastic bandit models for delayed conversions.
\newblock In \emph{UAI}, 2017.

\bibitem[Wei et~al.(2021)Wei, Lee, Zhang, and Luo]{wei2021linear}
Wei, C.-Y., Lee, C.-W., Zhang, M., and Luo, H.
\newblock Linear last-iterate convergence in constrained saddle-point
  optimization.
\newblock In \emph{ICLR}, 2021.

\bibitem[Weinberger \& Ordentlich(2002)Weinberger and
  Ordentlich]{weinberger2002delayed}
Weinberger, M.~J. and Ordentlich, E.
\newblock On delayed prediction of individual sequences.
\newblock \emph{IEEE Transactions on Information Theory}, 48\penalty0
  (7):\penalty0 1959--1976, 2002.

\bibitem[Yasui et~al.(2020)Yasui, Morishita, Komei, and
  Shibata]{yasui2020feedback}
Yasui, S., Morishita, G., Komei, F., and Shibata, M.
\newblock A feedback shift correction in predicting conversion rates under
  delayed feedback.
\newblock In \emph{WWW}, 2020.

\bibitem[Yoshikawa \& Imai(2018)Yoshikawa and Imai]{yoshikawa2018nonparametric}
Yoshikawa, Y. and Imai, Y.
\newblock A nonparametric delayed feedback model for conversion rate
  prediction.
\newblock \emph{arXiv preprint arXiv:1802.00255}, 2018.

\bibitem[Zheng et~al.(2017)Zheng, Meng, Wang, Chen, Yu, Ma, and
  Liu]{zheng2017asynchronous}
Zheng, S., Meng, Q., Wang, T., Chen, W., Yu, N., Ma, Z.-M., and Liu, T.-Y.
\newblock Asynchronous stochastic gradient descent with delay compensation.
\newblock In \emph{ICML}, 2017.

\bibitem[Zhou et~al.(2017)Zhou, Mertikopoulos, Bambos, Glynn, and
  Tomlin]{zhou2017countering}
Zhou, Z., Mertikopoulos, P., Bambos, N., Glynn, P.~W., and Tomlin, C.
\newblock Countering feedback delays in multi-agent learning.
\newblock In \emph{NeurIPS}, 2017.

\bibitem[Zinkevich et~al.(2009)Zinkevich, Langford, and
  Smola]{zinkevich2009slow}
Zinkevich, M., Langford, J., and Smola, A.
\newblock Slow learners are fast.
\newblock In \emph{NeurIPS}, 2009.

\end{thebibliography}

\newpage
\appendix
\onecolumn

\renewcommand{\theequation}{A\arabic{equation}}
\setcounter{equation}{0}
\renewcommand{\figurename}{Figure A}
\setcounter{figure}{0}

\begin{center}
{\LARGE\bf Appendix}
\end{center}

\section{Flowchart of the lemmas and theorems}

\begin{figure*}[h!]
    \centering
    \includegraphics[width=0.8\hsize]{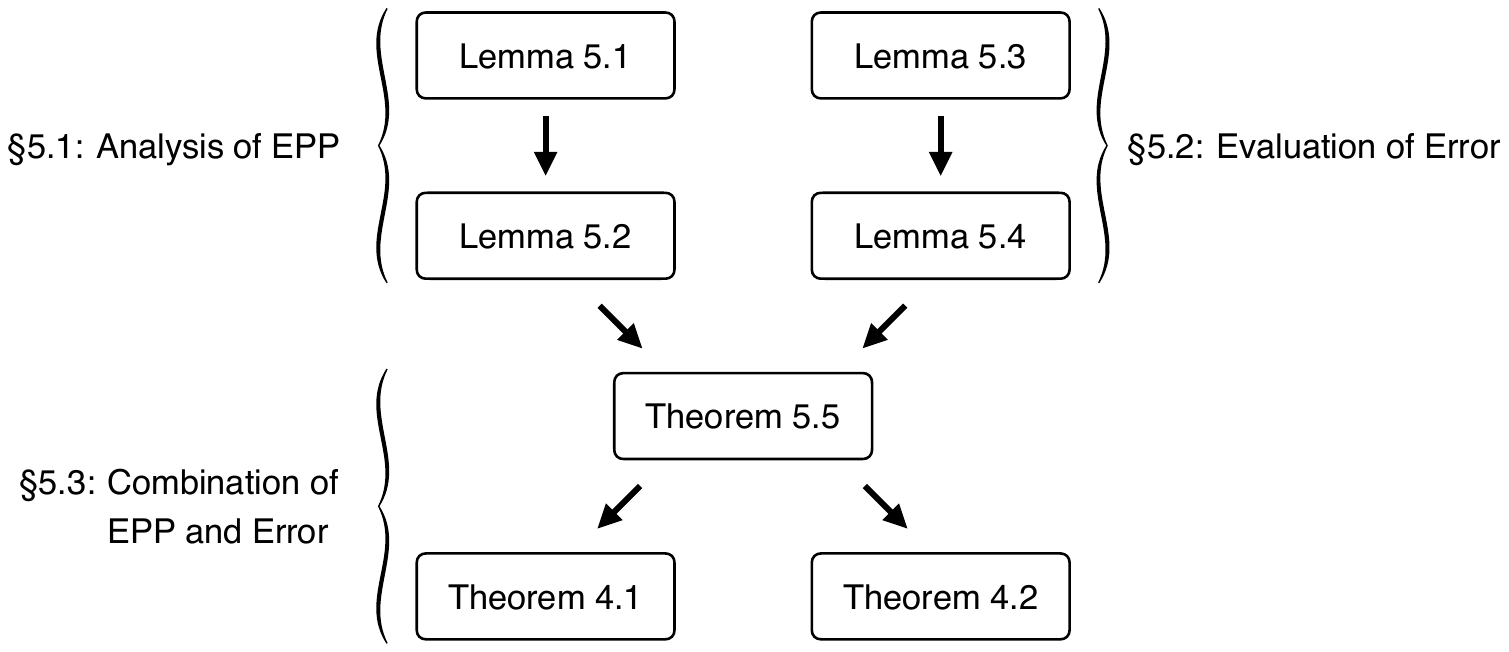}
    \caption{Flowchart of the lemmas and theorems. In \S\ref{ss_epp}, Lemma \ref{lem_dynamics_epp} captures the dynamics of EPP, and \ref{lem_analysis_epp} analyzes them. In \S\ref{ss_error}, Lemma \ref{lem_dynamics_error} captures the dynamics of the error between WOGDA and EPP, and Lemma \ref{lem_evaluation_error} analyzes them. In \S\ref{ss_combination}, Theorem \ref{thm_convergence_wogda} analyzes WOGDA by combining the convergence term by EPP and the error term. It leads to Theorem \ref{thm_convergence_pp}, providing the rate of linear convergence by next-step prediction, and Theorem \ref{thm_convergence_epp}, providing the rate of linear convergence by extra prediction.
    }
    \label{FA01}
\end{figure*}

\section{Proof of Lemma \ref{lem_dynamics_epp}}
\begin{proof}
The dynamics of \ref{EPP} are rewritten as
\begin{align}
    (\bs{I}-n\eta\bs{A})\bs{z}^{\dagger}_{t+1}=(\bs{I}-(n-1)\eta\bs{A})\bs{z}_{t}.
\end{align}
Since $\bs{A}$ is skew-symmetric matrix ($\bs{A}^{\rm T}=-\bs{A}$), $\bs{I}+\alpha\bs{A}$ has its inverse matrix for all $\alpha\in\mathbb{R}$. Furthermore, for all $\alpha,\alpha'\in\mathbb{R}$, $(\bs{I}+\alpha\bs{A})^{-1}$ and $(\bs{I}+\alpha'\bs{A})$ are commutative. Therefore, we obtain
\begin{align}
    \bs{z}^{\dagger}_{t+1}=(\bs{I}-n\eta\bs{A})^{-1}(\bs{I}-(n-1)\eta\bs{A})\bs{z}_{t}=(\bs{I}-(n-1)\eta\bs{A})(\bs{I}-n\eta\bs{A})^{-1}\bs{z}_{t}.
\end{align}

Since the inverse matrix of the denominator exists and it can be commutative with the numerator, it is acceptable to abuse the following notation.
\begin{align}
    \bs{z}^{\dagger}_{t+1}=\frac{\bs{I}-(n-1)\eta\bs{A}}{\bs{I}-n\eta\bs{A}}\bs{z}_{t}. \label{z_dagger}
\end{align}

Finally, we obtain
\begin{align}
    \|\bs{z}^{\dagger}_{t+1}\|^{2}&\overset{\rm a}{=}\bs{z}_{t}^{\rm T}\frac{\bs{I}^{\rm T}-(n-1)\eta\bs{A}^{\rm T}}{\bs{I}^{\rm T}-n\eta\bs{A}^{\rm T}}\frac{\bs{I}-(n-1)\eta\bs{A}}{\bs{I}-n\eta\bs{A}}\bs{z}_{t} \\
    &\overset{\rm b}{=}\bs{z}_{t}^{\rm T}\frac{\bs{I}+(n-1)\eta\bs{A}}{\bs{I}+n\eta\bs{A}}\frac{\bs{I}-(n-1)\eta\bs{A}}{\bs{I}-n\eta\bs{A}}\bs{z}_{t} \\
    &=\bs{z}_{t}^{\rm T}\frac{\bs{I}-(n-1)^{2}\eta^{2}\bs{A}^{2}}{\bs{I}-n^{2}\eta^{2}\bs{A}^{2}}\bs{z}_{t} \\
    &\overset{\rm c}{=}\bs{z}_{t}^{\rm T}\frac{\bs{I}+(n-1)^{2}\eta^{2}\bar{\bs{A}}^{2}}{\bs{I}+n^{2}\eta^{2}\bar{\bs{A}}^{2}}\bs{z}_{t}.
\end{align}
In (a), we used the transpose of Eq.~\eqref{z_dagger}. In (b), we used $\bs{I}^{\rm T}=\bs{I}$ and $\bs{A}^{\rm T}=-\bs{A}$. In (c), we used $\bs{A}^{2}=-\bar{\bs{A}}^{2}$.
\end{proof}

\section{Proof of Lemma \ref{lem_evaluation_error}}
By substituting Eq.~\eqref{WOGDA_delta} recursively $j\in\mathbb{N}$ times, we obtain
\begin{align}
    \Delta\bs{z}_{t}&=\eta\bs{A}\bs{z}_{t-m}+(n+m)\eta\bs{A}\Delta\bs{z}_{t-m-1} \\
    &=\eta\bs{A}\bs{z}_{t-m}+(n+m)\eta\bs{A}(\eta\bs{A}\bs{z}_{t-2m-1}+(n+m)\eta\bs{A}\Delta\bs{z}_{t-2m-2}) \\
    &=\cdots=\sum_{k=1}^{j}(n+m)^{k-1}\eta^{k}\bs{A}^{k}\bs{z}_{t-k(m+1)+1}+(n+m)^{j}\eta^{j}\bs{A}^{j}\Delta\bs{z}_{t-j(m+1)}. \label{deltaz}
\end{align}

Also, $\Delta^{2}\bs{z}_{t}$ is evaluated as
\begin{align}
    \Delta^{2}\bs{z}_{t}\overset{\rm a}{=}&\ \sum_{k=1}^{j}(n+m)^{k-1}\eta^{k}\bs{A}^{k}\Delta\bs{z}_{t-k(m+1)+1}+(n+m)^{j}\eta^{j}\bs{A}^{j}\Delta^{2}\bs{z}_{t-j(m+1)} \\
    \overset{\rm b}{=}&\ \sum_{k=1}^{j}(n+m)^{k-1}\eta^{k}\bs{A}^{k}\left(\sum_{l=1}^{j}(n+m)^{l-1}\eta^{l}\bs{A}^{l}\bs{z}_{t-(k+l)(m+1)+2}+(n+m)^{j}\eta^{j}\bs{A}^{j}\Delta\bs{z}_{t-(k+j)(m+1)+1}\right) \\
    &+(n+m)^{j}\eta^{j}\bs{A}^{j}\left(\sum_{k=1}^{j}(n+m)^{k-1}\eta^{k}\bs{A}^{k}\Delta\bs{z}_{t-(j+k)(m+1)+1}+(n+m)^{j}\eta^{j}\bs{A}^{j}\Delta^{2}\bs{z}_{t-2j(m+1)}\right) \\
    =&\ \sum_{k=1}^{j}\sum_{l=1}^{j}(n+m)^{k+l-2}\eta^{k+l}\bs{A}^{k+l}\bs{z}_{t-(k+l)(m+1)+2} \\
    &+2\sum_{k=1}^{j}(n+m)^{j+k}\eta^{j+k}\bs{A}^{j+k}\Delta\bs{z}_{t-(j+k)(m+1)+1}+(n+m)^{2j}\eta^{2j}\bs{A}^{2j}\Delta^{2}\bs{z}_{t-2j(m+1)}.
\end{align}
In (a) and (b), we applied Eq.~\eqref{deltaz} repeatedly.

Suppose sufficiently small step size $\eta\le 1/(2(n+m)\lambda_{\max})$, then we obtain
\begin{align}
    &\|\Delta^{2}\mc{Z}_{t}\| \\
    &\overset{\rm a}{\le}\Bigg(\left(n+\frac{m}{2}\right)(m+1)\sum_{k=1}^{j}\sum_{l=1}^{j}(n+m)^{k+l-2}\eta^{k+l}\lambda_{\max}^{k+l}+4(n+m)\sum_{k=1}^{j}(n+m)^{j+k-1}\eta^{j+k}\lambda_{\max}^{j+k}\Bigg. \nonumber\\
    &\hspace{0.6cm}\Bigg.+2(2n+m+1)(n+m)^{2j}\eta^{2j}\lambda_{\max}^{2j}\Bigg)\max_{2m\le s\le 2j(m+1)}\|\bs{z}_{t-s}\| \\
    &\overset{\rm b}{=}\Bigg(\left(n+\frac{m}{2}\right)(m+1)\eta^{2}\lambda_{\max}^{2}\left(\frac{1-(n+m)^{j}\eta^{j}\lambda_{\max}^{j}}{1-(n+m)\eta\lambda_{\max}}\right)^{2}+4(n+m)(n+m)^{j}\eta^{j+1}\lambda_{\max}^{j+1}\frac{1-(n+m)^{j}\eta^{j}\lambda_{\max}^{j}}{1-(n+m)\eta\lambda_{\max}}\Bigg. \nonumber\\
    &\hspace{0.6cm}\Bigg.+2(2n+m+1)(n+m)^{2j}\eta^{2j}\lambda_{\max}^{2j}\Bigg)\max_{2m\le s\le 2j(m+1)}\|\bs{z}_{t-s}\| \\
    &\overset{\rm c}{\le}\left(2(2n+m)(m+1)\eta^{2}\lambda_{\max}^{2}+8(n+m)^{j+1}\eta^{j+1}\lambda_{\max}^{j+1}+2(2n+m+1)(n+m)^{2j}\eta^{2j}\lambda_{\max}^{2j}\right)\max_{2m\le s\le 2j(m+1)}\|\bs{z}_{t-s}\|.
\end{align}
In (a), we upper-bounded $\bs{A}$ by $\lambda_{\max}$ and also upper-bounded $\bs{z}_{t}$, $\Delta\bs{z}_{t}$, and $\Delta^{2}\bs{z}_{t}$ in $\Delta^{2}\bs{\mc{Z}}_{t}$ as
\begin{align}
    \left\|n\sum_{s=1}^{m+1}\bs{z}_{t-s}+\sum_{s=2}^{m+1}\sum_{r=s}^{m+1}\bs{z}_{t-r}\right\|&\le n\sum_{s=1}^{m+1}\|\bs{z}_{t-s}\|+\sum_{s=2}^{m+1}\sum_{r=s}^{m+1}\|\bs{z}_{t-r}\| \\
    &\le \left(n+\frac{m}{2}\right)(m+1)\max_{1\le s\le m+1}\|\bs{z}_{t-s}\|, \\
    \left\|n\sum_{s=1}^{m+1}\Delta\bs{z}_{t-s}+\sum_{s=2}^{m+1}\sum_{r=s}^{m+1}\Delta\bs{z}_{t-r}\right\|&=\left\|n(\bs{z}_{t}-\bs{z}_{t-(m+1)})+\sum_{s=2}^{m+1}(\bs{z}_{t-s+1}-\bs{z}_{t-(m+1)})\right\| \\
    &\le n(\|\bs{z}_{t}\|+\|\bs{z}_{t-(m+1)}\|)+\sum_{s=2}^{m+1}(\|\bs{z}_{t-s+1}\|+\|\bs{z}_{t-(m+1)}\|) \\
    &\le 2(n+m)\max_{0\le s\le m+1}\|\bs{z}_{t-s}\|, \\
    \left\|n\sum_{s=1}^{m+1}\Delta^{2}\bs{z}_{t-s}+\sum_{s=2}^{m+1}\sum_{r=s}^{m+1}\Delta^{2}\bs{z}_{t-r}\right\|&=\left\|n(\bs{z}_{t+1}-\bs{z}_{t})+(\bs{z}_{t}-\bs{z}_{t-m})-(n+m)(\bs{z}_{t-m}-\bs{z}_{t-(m+1)})\right\| \\
    &\le n(\|\bs{z}_{t+1}\|+\|\bs{z}_{t}\|)+(\|\bs{z}_{t}\|+\|\bs{z}_{t-m}\|)+(n+m)(\|\bs{z}_{t-m}\|+\|\bs{z}_{t-(m+1)}\|) \\
    &\le 2(2n+m+1)\max_{-1\le s\le m+1}\|\bs{z}_{t-s}\|.
\end{align}
In (b), we summed up the geometric series. In (c), we used $\eta\le 1/(2(n+m)\lambda_{\max})$.

Finally, by Lemma \ref{lem_dynamics_error}
\begin{align}
    \|\bs{z}^{\dagger}_{t+1}-\bs{z}_{t+1}\|^{2}&=\Delta^{2}\bs{\mc{Z}}_{t}^{\rm T}\frac{\eta^2\bar{\bs{A}}^{2}}{\bs{I}+n^{2}\eta^2\bar{\bs{A}}^{2}}\Delta^{2}\bs{\mc{Z}}_{t} \\
    &\le \frac{\eta^{2}\lambda_{\max}^{2}}{1+n^{2}\eta^{2}\lambda_{\max}^{2}}\|\Delta^{2}\bs{\mc{Z}}_{t}\|^{2} \\
    &\le \eta^{2}\lambda_{\max}^{2}\|\Delta^{2}\bs{\mc{Z}}_{t}\|^{2} \\
    &\le \rm{ER}(j,n)^{2}\|\Delta^{2}\bs{\mc{Z}}_{t}\|^{2}.
\end{align}

\section{Proof of Theorem \ref{thm_convergence_pp}}
By substituting $n=1$ and $j=2$, we obtain linear convergence rate as
\begin{align}
    \rm{LCR}_{\rm{WOGDA}}(2,1)=&\ 1-\frac{1}{2}\eta^{2}\lambda_{\min}^{2}+\frac{1}{2}\eta^{4}\lambda_{\min}^{4}+2(m+2)(m+1)\eta^{3}\lambda_{\max}^{3} \\
    &+8(m+1)^{3}\eta^{4}\lambda_{\max}^{4}+2(m+3)(m+1)^{4}\eta^{5}\lambda_{\max}^{5}.
\end{align}

For $\eta=1/56(m+1)^{2}\kappa^{2}\lambda_{\max}$, this convergence rate is upper-bounded as
\begin{align}
    \rm{LCR}_{\rm{WOGDA}}(2,1)=&\ 1-\frac{1}{2}\eta^{2}\lambda_{\min}^{2}+\frac{1}{2}\eta^{4}\lambda_{\min}^{4}+2(m+2)(m+1)\eta^{3}\lambda_{\max}^{3} \\
    &+8(m+1)^{3}\eta^{4}\lambda_{\max}^{4}+2(m+3)(m+1)^{4}\eta^{5}\lambda_{\max}^{5} \\
    \le&\ 1-\frac{1}{2}\eta^{2}\lambda_{\min}^{2}+\frac{1}{56}\bigg\{\frac{1}{2}\eta^{2}\lambda_{\min}^{2}+4\eta^{2}\lambda_{\min}^{2}+8\eta^{2}\lambda_{\min}^{2}+6\eta^{2}\lambda_{\min}^{2}\bigg\} \\
    \le&\ 1-\frac{1}{6}\eta^{2}\lambda_{\min}^{2} \\
    =&\ 1-\frac{1}{6\cdot 56^{2}\kappa^{6}(m+1)^{4}}.
\end{align}

By the definition of Eq.~\eqref{LCR_WOGDA}, this immediately derives
\begin{align}
    \|\bs{z}_{t+1}\|&\le \left(1-\frac{1}{6\cdot 56^{2}\kappa^{6}(m+1)^{4}}\right)\max_{0\le s\le 4(m+1)}\|\bs{z}_{t-s}\| \\
    &\le \exp\left(-\frac{t}{6\cdot 56^{2}\kappa^{6}(m+1)^{4}}\right)\max_{0\le s\le 4(m+1)}\|\bs{z}_{t-s}\|.
\end{align}
By applying this $t$ times and defining $\|\tilde{\bs{z}}_{0}\|=\max_{0\le s\le 4(m+1)}\|\bs{z}_{s}\|$, we obtain
\begin{align}
    \|\bs{z}_{t}\|&\le \exp\left(-\frac{t}{6\cdot 56^{2}\kappa^{6}(m+1)^{4}}\times \frac{1}{4(m+1)}\right)\|\tilde{\bs{z}}_{0}\| \\
    &=\exp\left(-\frac{t}{4\cdot 6\cdot 56^{2}\kappa^{6}(m+1)^{5}}\right)\|\tilde{\bs{z}}_{0}\|.
\end{align}
This results in Eq.~\eqref{rate_pp} with positive constant $c=4\cdot 6\cdot 56^{2}$. 

\section{Proof of Theorem \ref{thm_convergence_epp}}
By substituting $n=m/2+1$, we obtain linear convergence rate as
\begin{align}
    \rm{LCR}_{\rm{WOGDA}}(j,\frac{m}{2}+1)=&\ 1-\frac{1}{2}(m+1)\eta^{2}\lambda_{\min}^{2}+\frac{1}{2}\left(\frac{m}{2}+1\right)^{2}(m+1)\eta^{4}\lambda_{\min}^{4} \\
    &+4(m+1)^{2}\eta^{3}\lambda_{\max}^{3}+8\left(\frac{3m}{2}+1\right)^{j+1}\eta^{j+2}\lambda_{\max}^{j+2}+2\left(\frac{m}{2}+3\right)\left(\frac{3m}{2}+1\right)^{2j}\eta^{2j+1}\lambda_{\max}^{2j+1}.
\end{align}

For $\eta=1/(93(m+1)^{\frac{2j}{2j-1}}\kappa^{2}\lambda_{\max})$, this convergence rate is upper-bounded as
\begin{align}
    \rm{LCR}_{\rm{WOGDA}}(j,\frac{m}{2}+1)=&\ 1-\frac{1}{2}(m+1)\eta^{2}\lambda_{\min}^{2}+\frac{1}{2}\left(\frac{m}{2}+1\right)^{2}(m+1)\eta^{4}\lambda_{\min}^{4}+4(m+1)^{2}\eta^{3}\lambda_{\max}^{3} \\
    &+8\left(\frac{3m}{2}+1\right)^{j+1}\eta^{j+2}\lambda_{\max}^{j+2}+2\left(2m+3\right)\left(\frac{3m}{2}+1\right)^{2j}\eta^{2j+1}\lambda_{\max}^{2j+1} \\
    \le&\ 1-\frac{1}{2}(m+1)\eta^{2}\lambda_{\min}^{2}+\frac{1}{62}\bigg\{\frac{1}{2}(m+1)\eta^{2}\lambda_{\min}^{2}+4(m+1)\eta^{2}\lambda_{\min}^{2} \\
    &+8\left(\frac{3m}{2}+1\right)\eta^{2}\lambda_{\min}^{2}+2\left(2m+3\right)\eta^{2}\lambda_{\min}^{2}\bigg\} \\
    \le&\ 1-\frac{1}{6}(m+1)\eta^{2}\lambda_{\min}^{2} \\
    =&\ 1-\frac{1}{6\cdot 93^{2}\kappa^{6}(m+1)^{1+\frac{2}{2j-1}}}.
\end{align}
By substituting $j=\lfloor\log(m+1)\rfloor+2$, we obtain
\begin{align}
    \rm{LCR}_{\rm{WOGDA}}(\lfloor\log(m+1)\rfloor+2,\frac{m}{2}+1)&\le 1-\frac{1}{6\cdot 93^{2}\kappa^{6}(m+1)^{1+\frac{2}{2\lfloor\log(m+1)\rfloor+3}}} \\
    &\le  1-\frac{1}{6\cdot 93^{2}\kappa^{6}(m+1)^{1+\frac{1}{\log(m+1)}}} \\
    &=1-\frac{1}{6\cdot 93^{2}e\kappa^{6}(m+1)}.
\end{align}

By the definition of Eq.~\eqref{LCR_WOGDA}, this immediately derives
\begin{align}
    \|\bs{z}_{t+1}\|\le \left(1-\frac{1}{6\cdot 93^{2}e\kappa^{6}(m+1)}\right)\max_{0\le s\le 2(\lfloor\log(m+1)\rfloor+2)(m+1)}\|\bs{z}_{t-s}\|.
\end{align}
By applying this $t$ times and defining $\|\tilde{\bs{z}}_{0}\|=\max_{0\le s\le 2(m+1)(\log (m+1)+2)}\|\bs{z}_{s}\|$, we obtain
\begin{align}
    \|\bs{z}_{t}\|&\le \exp\left(-\frac{t}{6\cdot 93^{2}e\kappa^{6}(m+1)}\times\frac{1}{2(\lfloor\log(m+1)\rfloor+2)(m+1)}\right)\|\tilde{\bs{z}}_{0}\| \\
    &\le \exp\left(-\frac{t}{6\cdot 93^{2}e\kappa^{6}(m+1)}\times\frac{1}{2(\log(m+1)+2)(m+1)}\right)\|\tilde{\bs{z}}_{0}\| \\
    &\le \exp\left(-\frac{t}{2\cdot 6\cdot 93^{2}e\kappa^{6}(m+1)^{2}(\log(m+1)+2)}\right)\|\tilde{\bs{z}}_{0}\|.
\end{align}
This results in Eq.~\eqref{rate_epp} with positive constant $c=2\cdot 6\cdot 93^{2}e$.


\end{document}